\documentclass[10pt,twocolumn,letterpaper]{article}

\usepackage[pagenumbers]{cvpr} % To force page numbers, e.g. for an arXiv version

\usepackage{amsfonts}
\usepackage{amsmath}
\usepackage{amssymb}

\usepackage{xspace}
\usepackage{xcolor}
\usepackage{comment}
\usepackage{nicematrix}
\usepackage{colortbl}
\usepackage{pgf}
\usepackage{tikz}
\usepackage{graphicx}

\newtheorem{remark}{Remark}
\newtheorem{definition}{Definition}
\newtheorem{lemma}{Lemma}

\newenvironment{proof}{\paragraph{Proof:}}{\hfill$\square$}

\usepackage{algpseudocode,algorithm,algorithmicx}

\usepackage{booktabs}
\usepackage{subcaption} 
\algrenewcommand\algorithmicrequire{\textbf{Input:}}
\algrenewcommand\algorithmicensure{\textbf{Output}}

\makeatletter
\algnewcommand{\AlgComment}[1]{\Statex \hskip\ALG@thistlm {\color{gray}\textit{// #1}}}
\algnewcommand{\AlgCommentIndent}[1]{\Statex \hskip\ALG@tlm {\color{gray}\textit{// #1}}}
\makeatother

\newcommand\T{\boldsymbol{{T}}}
\newcommand\R{\boldsymbol{{R}}}
\newcommand\idd{\boldsymbol{{I}}}
\newcommand\xvec{\boldsymbol{x}}

\newcommand\tvec{\boldsymbol{t}}
\newcommand\gvec{\boldsymbol{g}}
\newcommand\rvec{\boldsymbol{r}}
\newcommand\xivec{\boldsymbol{\xi}}
\newcommand\omegavec{\boldsymbol{\omega}}
\newcommand\vvec{\boldsymbol{v}}
\newcommand\VF{\mathfrak{X}}
\newcommand\grad{\mathrm{grad}}
\newcommand\hess{\mathrm{Hess}}
\newcommand{\ad}{\mathrm{ad}}

\newcommand{\SE}{\mathsf{SE}}
\newcommand{\cov}{\boldsymbol{\Sigma}}
\newcommand{\D}{\mathrm{D}}
\newcommand{\zerovec}{\boldsymbol{0}}
\newcommand\manifold{\mathcal{G}\xspace}
\newcommand\liealgebra{\mathfrak{g}\xspace}
\newcommand{\tangentse}{T_{\T}\manifold}
\newcommand\ourmethod{\text{G-CVO}\xspace}
\newcommand\ellvec{\boldsymbol{\ell}}
\newcommand\zvec{\boldsymbol{z}}
\newcommand{\lidar}{\text{LiDAR}\xspace}

\usepackage{fancyhdr}

\definecolor{cvprblue}{rgb}{0.21,0.49,0.74}
\usepackage[breaklinks,colorlinks,allcolors=cvprblue]{hyperref}
\title{Generalized-CVO: Fast and Correspondence-Free Local Point Cloud Registration with Second Order Riemannian Optimization$^{1}$\thanks{$^{1}$Website: {\tt\scriptsize  \url{https://github.com/ToyotaResearchInstitute/gcvo.git}}}}

\author{
Ray Zhang$^{2}$, 
Marcus Greiff$^{2}$, 
Thomas Lew$^{2}$, 
John Subosits$^{2}$
\thanks{$^{2}$All authors are with Toyota Research Institute, 
4440 El Camino Real, Los Altos, California, USA. 
{\tt\small \{ray.zhang, marcus.greiff, thomas.lew, john.subosits\}@tri.global}}
}

\begin{document}
\maketitle
\thispagestyle{fancy}
\begin{abstract}
We propose a fast and correspondence-free local point cloud registration method 
that leverages geometric surface structure and reproducing kernel Hilbert space (RKHS) embeddings. The method represents point clouds as continuous functions with point-wise anisotropic kernels that encode local geometry. This formulation improves alignment along surface normals while relaxing alignment along tangential directions. To solve the resulting registration problem, we propose a second-order on-manifold optimization scheme with approximate Riemannian Hessians, achieving a speedup of up to 10x over the first-order solvers used in prior correspondence-free RKHS-based methods. We demonstrate improved frame-to-frame LiDAR and RGB-D tracking accuracy across diverse indoor and outdoor datasets. On a LiDAR tracking registration task in the driving domain, we achieve a reduction of $>55\%$ in both translational and rotational drift in challenging feature-sparse environments. On object registration benchmarks, we show improved robustness over ICP-based methods and further gains when refining global initialization, particularly under moderate misalignment.
%We evaluate object registration under moderate misalignment, demonstrating robustness to noise and outliers and gains from refining global initialization.
%\ray{We further evaluate the method on object registration benchmarks with moderate initial misalignment, where it demonstrates robustness to noise and outliers, and further improves alignment when used to refine initial estimates from global registration methods.} %On indoor datasets with large motion and less overlap in point clouds, the proposed method shows promising gains relative to prior correspondence-free methods.  
\end{abstract}
\section{Introduction}\label{sec:intro}

\begin{figure}[!t] 
\centering
\begin{tikzpicture}
  \node[anchor=north] at (0,1.0) {\includegraphics[width=0.8\columnwidth,trim=0 0 0 0,clip]{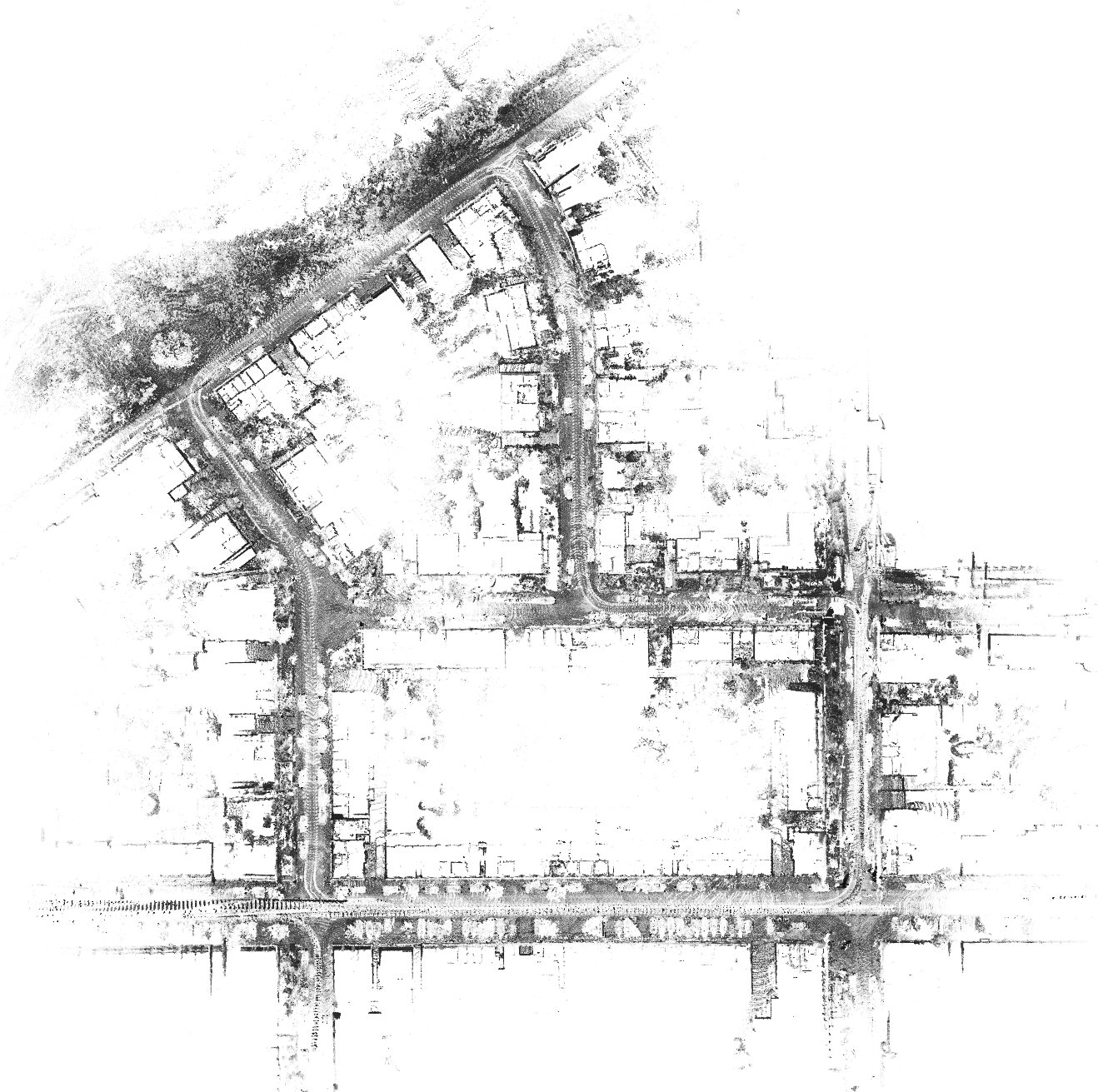}};
    
  \node[anchor=north] at (0,6) {\includegraphics[width=\columnwidth]{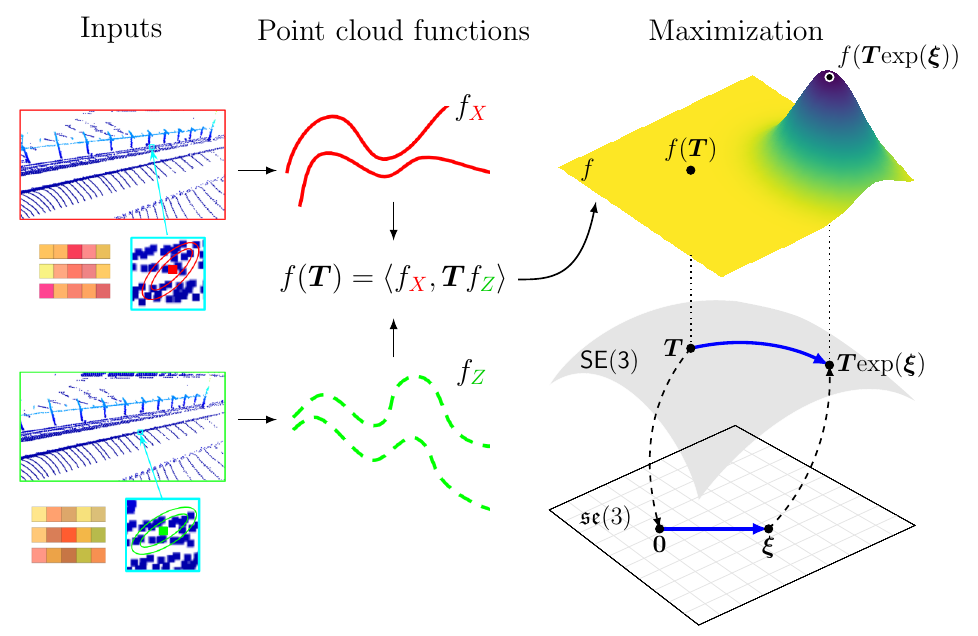}};

\draw[thin,-latex] 
  (3,0.7) .. controls (3,0.1) and (2.5,-0.4) .. (1.5,-1);

\end{tikzpicture}

\vspace*{-15pt}  

\caption{%The upper plot shows the formulation of the proposed correspondence-free point cloud registration
The proposed correspondence-free point cloud registration method in an RKHS using anisotropic kernel embeddings. The registration problem is solved locally via a second-order Riemannian Gauss-Newton optimization.
The lower plot shows a stacked point cloud map of KITTI~\cite{Geiger2012kitti} sequence \texttt{07}, constructed by frame-to-frame tracking without local mapping or loop closure.
}
\vspace*{-15pt} 
\label{fig:teaser}
\end{figure}

Point cloud registration is the process of aligning two or more 3D point clouds, i.e., sets of points in space that represent the geometric structure of an object or environment, so that they share a common coordinate system~\cite{besl1992method, segal2009gicp}. It has been widely adopted in driving~\cite{zhang2024lidar}, robotic autonomy~\cite{sun2020scalability}, navigation~\cite{kim2022forward_dynamics}, and scene reconstruction~\cite{vizzo2021poisson}, especially when other sensors such as GPS are unavailable or unreliable. In robot localization frameworks such as \lidar and visual odometry, registration is a key component that consists of estimating robot trajectories and spatially consistent world models using sequential point cloud measurements from \lidar and RGB-D  scans~\cite{lee2024lidar, cadena2016pastpresentslam}. %A typical \lidar odometry system includes a frontend that tracks incoming frames and a backend that manages the existing map~\cite{zhang2014loam}. 
{In odometry settings (often referred to as tracking), registration is commonly initialized with a pose prior that is iteratively refined with respect to a registration objective; this is the primary focus of this work. Such initialization can be obtained by integrating velocity or acceleration measurements~\cite{xu22fastlio2}. In contrast, global registration methods are designed to handle large initial errors~\cite{yang2015go,yang2020teaser} without utilizing pose priors.}

%The top figure illustrates the correspondence-free point cloud registration of two inputs in RKHS using anisotropic kernel embeddings. The second plot shows a stacked point cloud map of KITTI~\cite{Geiger2012kitti} sequence \texttt{07}, built from the estimated trajectory of frame-to-frame tracking with the proposed method, without local mapping or loop closure.
%\begin{figure}[t!]
%    \centering
%    \includegraphics[width=\columnwidth]{manifold.pdf}
%    \caption{Correspondence-free RKHS point cloud registration of ${\color{green!80!black}Z}$ to a target ${\color{red}X}$ using anisotropic kernel embeddings. }
 %   \label{fig:placeholder}
%\end{figure}

Widely adopted in %recent 
vision-based odometry systems, correspondence-based point cloud registrations may perform well in feature-rich environments~\cite{zhang2014loam, shan2018lego, behley2018suma, oelsch2021rloam, wang2021floam,  koide2021gicpcost, vizzo2023kiss, zheng2024traj}, but face challenges in feature-sparse settings~\cite{Zhao_2024_CVPR_subt}. 
%These methods usually follow a two-step alternating optimization scheme. First, for each point in the target point cloud, the nearest neighbors in the source point cloud are identified. The distances between these pairs are aggregated, and the total sum of the residuals is minimized over the relative pose between the clouds~\cite{besl1992method, segal2009gicp}. 
Correspondence-based registrations typically use a two-step alternating optimization procedure~\cite{censi2008icp}, first identifying nearest neighbors between target and source point clouds (correspondence), followed by finding the relative pose by minimizing aggregated residuals subject to the correspondences~\cite{besl1992method, segal2009gicp}. In the correspondence search, the matching process often leverages a spectrum of feature representations, ranging from efficient planar features~\cite{segal2009gicp} and bespoke local descriptors~\cite{rusu2009fpfh} to computationally expensive deep learning–based feature embeddings~\cite{choy2019fcgf, huang2021predator, qin2023geotransformer}.
%Correspondence-based registrations form the foundation of many \lidar-only and \lidar-IMU odometry systems~\cite{zhang2014loam, xu22fastlio2, %}, frequently used in driving~\cite{zhang2024lidar}. 
However, correspondence-based methods rely on the key assumption that point associations can be established reliably, which may not hold in feature-sparse environments such as rural or off-road navigation~\cite{Zhao_2024_CVPR_subt}. Robustness challenges in such environments have motivated the use of robust loss functions~\cite{huber1992robust, aftab15irls} and certifiable  methods~\cite{yang2020teaser}. 

\iffalse
\begin{figure}[t!]
    \centering
    \begin{tikzpicture}
    \node[] at (0,0) %{\includegraphics[width=\columnwidth]{icra_fig_no_labels.png}};
    {\includegraphics[width=0.9\columnwidth]{hero_updated.png}};
    \node[] at (2.8,0) {
    $\underset{\T\in\mathsf{SE(3)}}{\max} \langle f_{\color{red}X}, \T  f_{{\color{green!80!black}Z}}\rangle$
    };
    \node[fill=white] at (-2.85,2.33) {{\color{red}Target: $X$}};
    \node[fill=white] at (-2.35,-2.35) {{\color{green!80!black}Measurement: $Z$}};
    \node[fill=white] at (3.15,1.34) {{\color{red} $\boldsymbol{\Sigma}(\xvec)$}};
    \node[fill=white] at (3.15,-2.30) {{\color{green!80!black} $\boldsymbol{\Sigma}(\zvec)$}};
    \end{tikzpicture}
    \vspace*{-10pt}
    
    \caption{Correspondence-free RKHS point cloud registration of ${\color{green!80!black}Z}$ to a target ${\color{red}X}$ using anisotropic kernel embeddings. }
    \label{fig:placeholder}
\end{figure}
\fi

To tackle challenges in feature-sparse and noisy environments, correspondence-free registration methods such as Continuous Visual Odometry (CVO)~\cite{MGhaffari-RSS-19} have emerged as a promising alternative. In contrast to discrete point sets, correspondence-free methods represent point clouds as continuous functions embedded in a reproducing kernel Hilbert space  
(RKHS)~\cite{clarkmaani20, MGhaffari-RSS-19, zhang2024correspondence,  zhang2025rkhs}. This line of work demonstrates robustness to sensor noise and outliers, as it does not rely on known pairwise correspondences~\cite{zhang2025rkhs}. Additionally, it enables the integration of pose-invariant features such as color and pixel labels through the functional representation~\cite{zhang2021new}. 
However, existing RKHS-based formulations do not exploit local geometric structures often present in point cloud measurements, including surfaces and edges~\cite{segal2009gicp, zhang2014loam}. Furthermore, the cited RKHS registrations rely on  1st-order solver implementations, which are prohibitively slow in applications such as high-performance driving~\cite{lew2024risk,djeumou2024one}.%, where tracking and controllers must run at high rates. 

\textbf{Contributions}: To address these limitations, we propose a correspondence-free objective in an RKHS encoding local geometric information and a second-order on-manifold solver, \textbf{G}eneralized \textbf{CVO} (\ourmethod). %Each point cloud is represented as a function in RKHS with anisotropic local kernel embeddings. 
Our RKHS kernel embeddings are implicit and do not rely on additional feature extraction, allowing the registration to fit tightly along surface normals while being less constrained along the surface tangent plane, and resulting in improved robustness and performance in standard registration and tracking metrics~\cite{Geiger2012kitti}.  %While we focus on \lidar and RGB-D frame-to-frame tracking, we also show that the proposed method can be used for larger-motion registrations under less overlap.
%\subsection{Contributions}
Specifically, we make the following contributions:
\begin{itemize}
    \item We extend the RKHS loss function in~\cite{MGhaffari-RSS-19} to encode local geometries with anisotropic kernel embeddings.
    \item We propose a second-order solver using Riemannian Gauss-Newton that is significantly faster than the first-order optimizers in the cited RKHS registration methods.
    \item Results show that \ourmethod achieves superior tracking accuracy and robustness on the urban driving datasets of KITTI~\cite{Geiger2012kitti}, self-collected feature-sparse vehicle-racing datasets, and the indoor ETH RGB-D dataset~\cite{schops2019badslam}. Furthermore, \ourmethod shows robust ModelNet40~\cite{zeng20173dmatch} registration under moderate misalignment.
    %, with improvements from global initialization. 
\end{itemize}
%Additionally, we show that the approximated Riemannian Gauss-Newton solver improves computation times by up to one order of magnitude, which can be used to speed up other RKHS-based registration methods, notably~\cite{MGhaffari-RSS-19,zhang2021new}.

\section{Related Works}\label{sec:prelim}
%\subsection{Frame-to-Frame Registration}\label{sec:frametoframe}

ICP-based methods are widely used for frame-to-frame registration and consist of an alternating optimization procedure~\cite{besl1992method, chen1992object, segal2009gicp, koide2021voxelized, LiCVPR16edges}.  
%These methods usually follow a two-step alternating optimization scheme. First, for each point in the target point cloud, the nearest neighbors in the source point cloud are identified. The distances between these pairs are aggregated, and the total sum of the residuals is minimized over the relative pose between the clouds
The first step finds point correspondences, usually with a KD tree~\cite{greenspan2003approximate, cai2021ikdtree}. The second step uses an  SVD~\cite{besl1992method} or an on-manifold solver~\cite{bhattacharya2019efficient, breux2022manifold} to find an optimal transformation subject to the correspondences. %This method has been successfully applied to smoothing, mapping, and \lidar-IMU odometry~\cite{zhang2014loam, shan2018lego, behley2018suma, oelsch2021rloam, wang2021floam,  koide2021gicpcost, vizzo2023kiss, zheng2024traj}. 
% Subsequent works have extended ICP to more general manifolds by convex optimization~\cite{yang2020teaser}.
To address the error-prone nature of finding point correspondences in ICP, robust estimation techniques such as M-estimators are often employed to mitigate the impact of large residuals~\cite{stefanski2002calculus}. The robust loss functions can be optimized by graduated non-convexity (GNC)~\cite{yang2020teaser} or iteratively reweighted least squares (IRLS)~\cite{aftab15irls}. Unlike the methods above, our approach does not require pairwise correspondences, which may reduce sensitivity to outliers~\cite{zhang2024correspondence}.  

\begin{comment}
\subsection{LiDAR Smoothing}
LiDAR smoothing and mapping extend the registration of two sequential frames into a registration between an online-constructed map and the %
\begin{itemize}
    \item LOAM, IMLS-SLAM,  LeGo-LOAM, Suma, DLO, F-LOAM, A-LOAM, MULLS, NDT-LOAM, R-LOAM, GenZ-ICP
    \item Continuous time methods: CT-ICP, Kiss-ICP, Traj-LO
    \item Differentiable map representations: PINS-SLAM, 
\end{itemize}
\end{comment}

%\subsection{Registration with local geometric information}
Another approach to improve robustness is to leverage the local structural information in point clouds. ICP residuals can be reformulated as point-to-feature residuals, such as point-to-plane~\cite{chen1992object, segal2009gicp} or point-to-edge~\cite{censi2008icp}, which have shown significant improvements in practical \lidar odometry applications~\cite{zhang2014loam, deschaud2018imls,xu2021fastliofastrobustlidarinertial,xu22fastlio2, azzini2025balo}. %  ICP, Point-to-Plane ICP, TrICP, GICP, GICP-SE(3), Fast-VGICPto-edge 
%NDT
Some probabilistic approaches model point clouds as Gaussian mixture models (GMM)~\cite{bishop2006prml, eckart2018hgmr}, encoding surfaces and edges by Gaussian distributions. NDT~\cite{magnusson2009three, das2014scan, miao2016pose} discretizes space and fits Gaussians to each voxel, and then fits a second point cloud to the voxelized collection of Gaussians. %\color{red}While promising, these methods still rely on accurate point correspondences, making them liable to outliers. 
Neural networks have been adopted to embed point clouds to capture local invariant and equivariant features in the feature space~\cite{phan2018dgcnn, yu2021cofinet,  huang2021predator, zhu2022so3,  
zhu2023e2pn, qin2023geotransformer,  lin2024se3et}. Learning-based methods provide rich feature embeddings and coarse correspondences, even under large motions, but a fine-grained registration stage is typically required, either to drive the alignment with these features~\cite{zhou2016fast,  yang2020teaser, Chen_2022_sc2pcr, yang2024mac,  zhao2025regent} or to refine it as post-processing~\cite{segal2009gicp,  yang2024mac, zhao2025regent}. Our method can incorporate such features in the kernel embeddings, or omit feature extraction altogether to favor computational efficiency.

%In contrast, our method can either integrate pose-invariant features directly into the functional representation or dispense with feature extraction entirely when fast operation is required.

%The other way around is to minimize the divergence of two point clouds' GMM distributions~\cite{wang2006groupwise}.

%Recently, deep learning-based approaches learn locally invariant or equivariant features.
%\subsection{RKHS-Based methods}\label{sec:RKHSrelatedwork}
RKHS-based registration addresses the limitations of ICP by removing the need for correspondences~\cite{williams2006gaussian}. This family of methods includes CVO~\cite{MGhaffari-RSS-19} and SemanticCVO~\cite{zhang2021new}, which represent color and geometric information as functions in RKHS and formulate the registration problem as the maximization of the inner product between two point cloud functions (see Fig.~\ref{fig:teaser}). Kernel-Correlation~\cite{tsin2004correlation} is a special case of CVO that does not use pixel labels. RKHS-BA~\cite{zhang2025rkhs}  extends frame-to-frame registration to multi-view bundle adjustment, enabling large-scale mapping. Equiv-Align~\cite{zhang2024correspondence} makes RKHS registration differentiable, enabling end-to-end learning of point-wise features. 
Unlike CVO, which uses an isotropic kernel, \ourmethod constructs spatially varying anisotropic kernels that adapt to local geometry, improving registration accuracy.

%Additionally, \ourmethod is amenable to parallelization, as the Riemannian gradients and Hessians can be expressed in closed form.

%The proposed method in this work builds upon the CVO method in~\cite{MGhaffari-RSS-19,Zhang2020semanticcvo}, and extends these formulations with additional constraints imposed through the kernels of the RHKS from surface scans.

%The method proposed in this paper similarly enforces geometric constraints as in point-to-plane losses, but improves robustness by using convergence in the norm of functions rather than relying on point-wise correspondences.
 % -*- TeX-master: "root.tex" -*-

\subsection{Notation}
Let $X=\{(\xvec_i,\ellvec_i^{X})\in\mathbb{R}^3\times \mathbb{R}^{l}\;:\;i\in[N_X]\}$ be a target point cloud or map, and $Z=\{(\zvec_i,\ellvec_i^{Z})\in\mathbb{R}^3\times \mathbb{R}^{l}\;:\;i\in[N_Z]\}$ be a new point cloud  to be registered, where $\ellvec_i^{j}$ denotes pose-invariant features such as intensity. $N_X$ and $N_Z$ may differ. We denote $\bar{X}$ and $\bar{Z}$ as sets that only contain the 3D coordinates. A pose relating these frames is denoted $\T\in\mathsf{SE(3)}$. Its tangent space is defined as $T_{\T}\SE(3)=\{\vvec: \vvec=\boldsymbol{\gamma}^{\prime}(0), \boldsymbol{\gamma}(0)=\T\}$ where $\boldsymbol{\gamma}:\mathbb{R}\rightarrow \SE(3)$ is a smooth curve passing through $\T$. In particular, its Lie algebra is the tangent space at the identity, i.e. $\mathfrak{se}(3)=T_{\idd}\SE(3)$. In the following we use the shorthand $\manifold=\mathsf{SE(3)}$ with $\liealgebra=\mathfrak{se}(3)$. % denoting the Lie algebra. 

The action of a pose in $\manifold$ on the 3D points is  $\T \xvec=\R \xvec  + \tvec$, where $\R\in \mathsf{SO(3)}$ and $\tvec\in \mathbb{R}^3$. We let $(\cdot)^{\land}:\mathbb{R}^6\mapsto \liealgebra$ 
denote the \emph{hat} map, mapping a twist vector $\xivec=(\omegavec^{{\top}}, \vvec^{{\top}})^{\top}$ with rotational and positional components $\omegavec\in \mathbb{R}^3$ and $\vvec\in \mathbb{R}^3$, respectively, to the Lie algebra $\liealgebra$. We let the \emph{vee} map $(\cdot)^\vee$ denote the inverse of the hat map. %, and use it for the  represention with a 6D vector via pulling back to $\liealgebra$ by left translation. 
With a slight abuse of notation, we denote the vee map of $\mathsf{SO}(3)$  by $(\cdot)^{\vee}:\mathfrak{so}(3)\mapsto\mathbb{R}^3$. Then a twist acts on a point $\xvec\in\mathbb{R}^3$ by $\xivec^{\wedge} \xvec = \omegavec^{\wedge} \xvec + \vvec$.
\iffalse, and
\begin{align*}
    \xvec^{\wedge} = \begin{bmatrix}
        x\\
        y\\
        z
    \end{bmatrix}^{\wedge} \hspace{-4pt}= 
    \begin{bmatrix}
        0 & -z & y\\
        z & 0 &  -x\\
        -y & x & 0
    \end{bmatrix}\quad\text{if}\quad\xvec\in\mathbb{R}^3.%, \begin{bmatrix}
    %    \omegavec\\
    %    \vvec   
    %\end{bmatrix}^{\wedge} \hspace{-4pt}= 
    %\begin{bmatrix}
    %    \omegavec^{\wedge} & \vvec\\
    %     \zerovec & 0
    %\end{bmatrix}\in \liealgebra
\end{align*}
\fi
The matrix exponential $\exp: \liealgebra\mapsto \manifold$ maps a twist to a pose, and the  logarithm defines its inverse:
\begin{align}
    \exp(\xivec^{\wedge})&=\T,&
    \log(\T)&= \xivec^{\wedge}.
\end{align}
A vector field $\VF(\manifold)$ assigns one tangent element $U\in \tangentse$ to each $\T\in \manifold$. To measure distances on the manifold, we consider a Riemannian metric~\cite{boumal2023optmfd}, which is a smooth map $\langle \cdot, \cdot \rangle: \tangentse\times \tangentse\mapsto \mathbb{R}$. A Riemannian metric is left-invariant if $\langle U, V\rangle=\langle \T^{-1}U, \T^{-1}V\rangle$, which enables pulling back the metric on $\tangentse$ at $\T$ to the metric on $\liealgebra$ at the identity element.
We use a specific left-invariant metric 
\begin{align}
  \langle ((\omegavec_1^{\top}, \vvec_1^{\top})^{\top})^{\wedge}, ((\omegavec_2^{\top}, \vvec_2^{\top})^{\top})^{\wedge} \rangle = \omegavec_1^{\top}\omegavec_2 + \vvec_1^{\top}\vvec_2.
  \label{eq:se3_metric}
\end{align}
Throughout, tangent quantities on $\mathcal{G}$ are left-trivialized by left translation with $\T^{-1}$ onto $\liealgebra \simeq \mathbb{R}^6$ via the vee map$(\cdot)^\vee$.

\subsection{Riemannian Gradients and Hessians}
To derive Riemannian gradients, we note that for a smooth function $f: \manifold\rightarrow \mathbb{R}$, and any smooth curve $\boldsymbol{\gamma}: \mathbb{R}\rightarrow \manifold$ satisfying $\boldsymbol{\gamma}(0)=\T\in\manifold$ with $\boldsymbol{\gamma}'(0) = U\in \tangentse$,  the gradient relates to the directional derivative along $U$ by 
\begin{align}\label{eq:equivalence}
    \mathrm{D}f(\T)[U]=\langle \grad f(\T), U\rangle=\frac{\mathrm{d}}{\mathrm{d}t}\Big|_{t=0}f(\boldsymbol{\gamma}(t)).
\end{align}
We refer to~\cite{boumal2023optmfd} for details, and use the equivalence in~\eqref{eq:equivalence} when deriving the Riemannian gradients $\grad f$ in Sec.~\ref{sec:gencvo}.

To derive Hessians, we need to take derivatives with respect to Riemannian gradients. When calculating the directional derivative of $\grad f$, directly subtracting two tangent vectors is not well-defined when they reside in different tangent spaces. Instead, we use the Levi-Civita connection to project the Hessian onto the tangent space. We denote the connection by $
  \nabla: \VF(\manifold)\times\VF(\manifold) \mapsto \VF(\manifold)$ (see~\protect{\cite[Chapter 5.3]{absil2008optimization} for details), 
and with it, the Riemannian Hessian is 
\begin{align}\label{eq:hessequiv}
\hess f(\T)[U]=\nabla_{U} \grad f({\T}) . 
\end{align}
Following~\cite{absil2008optimization}, %and using the metric in~\eqref{eq:se3_metric}, 
 we  express~\eqref{eq:hessequiv} in the context of $\manifold$ as
\begin{align}
  \nabla_{U} \grad f = \mathrm{D}(\grad f)[U] + \Gamma(U, \grad f).
  \label{eq:koszul}
\end{align}
The first term is the directional derivative of the Riemannian gradient $\grad f$ along $U$. The  correction term $\Gamma(U, \grad f)$ in~\eqref{eq:koszul} can be derived from the  Koszul formula~\protect{\cite[Chapter 21.3]{gallier2020differential} with the left-invariant metric in~\eqref{eq:se3_metric} for $\liealgebra$: %, and can be expressed compactly as 
\begin{align}
%   \Gamma(U, \grad f) \hspace{-1pt}=\hspace{-1pt} \tfrac{1}{2}([U, \grad f] \hspace{-1pt}&-\hspace{-1pt} (\mathrm{ad}_{U}^{*} (\grad f)^{\vee})^{\wedge} \hspace{-1pt}
%   \\
%   &-\hspace{-1pt} (\mathrm{ad}_{\grad f}^{*} (U)^{\vee} )^{\wedge}),
   \Gamma(U,&\grad f) \hspace{-2pt}=\hspace{-2pt} \tfrac{1}{2}([U, \hspace{-2pt}\grad f] \hspace{-2pt}-\hspace{-2pt} (\mathrm{ad}_{U}^{*} \grad f^{\vee}\hspace{-1pt})^{\wedge} 
   \hspace{-2pt}-\hspace{-2pt} (\mathrm{ad}_{\grad f}^{*} U^{\vee}\hspace{-1pt})^{\wedge}\hspace{-1pt}),
   \label{eq:gamma_se3}
   \vspace{-10pt}
\end{align}
where $[\cdot, \cdot]:\liealgebra\times \liealgebra\rightarrow \liealgebra$ is the Lie bracket and the adjoint is
\begin{align}\label{eq:adjoints}
\mathrm{ad}_{\xivec^{\wedge}}:=\begin{bmatrix}
    \omegavec^{\wedge} & \zerovec\\
    \vvec^{\wedge} & \omegavec^{\wedge}
  \end{bmatrix}, \quad \mathrm{ad}_{\xivec^{\wedge}}^{*}:=\begin{bmatrix}
    -\omegavec^{\wedge} & -\vvec^{\wedge}\\
    \zerovec & -\omegavec^{\wedge}
  \end{bmatrix}.%,
\end{align}
Let $\gvec(\T)\triangleq(\grad f(\T))^{\vee}$ 
 denote the 6D vector representation of $\grad f(\T)$. The Hessian  in the vector form is 
 \begin{align}
   \hspace{-2pt}  (\hess f(\T)[U])^{\vee}\hspace{-2pt}=\hspace{-2pt}\mathrm{D}\gvec(\T)[U^{\vee}]\hspace{-2pt} + \hspace{-2pt}\Gamma^{\vee}(U,\hspace{-1pt} \grad f(\T)).
     \label{eq:hess_se3_vec}
 \end{align}

\subsection{Classical RKHS loss functions}\label{sec:rkhsloss}
We briefly introduce point cloud registrations in RKHS from CVO~\cite{MGhaffari-RSS-19, zhang2021new}. In CVO's formulation, point clouds are represented as continuous functions in an RKHS $\mathcal{H}$~\cite{MGhaffari-RSS-19} instead of discrete point sets. For a point cloud $X$, we let
\begin{align}
\label{eq:point_cloud_function}
f_X(\cdot)=\hspace{-5pt}\sum_{(\xvec, \ellvec^{X})\in X} \hspace{-5pt}c(\ellvec^X) k(\xvec, \cdot),
\end{align}
 where $c: \mathbb{R}^{l} \mapsto \mathbb{R}^{+}$ is a   %scalar-valued intensity 
 label function of pose-invariant features, such as \lidar points' intensities and RGB-D points' colors. Here, $k(\cdot): \mathbb{R}^3\times \mathbb{R}^3\mapsto \mathbb{R}$ is a kernel function.
 In the following, we denote $\T f_X(\cdot)=f_{\T^{-1} X}( \cdot)$ for $\T\in\manifold$ as the pose $\T$ acting on the representation $f_X$ of the frame $X$. The distance in the RKHS  between the functions associated with a source frame $X$ and a target frame $Z$  is 
\begin{align}
\label{eq:rkhs_dist}
\hspace{-1pt}\|f_X-\T f_Z&\|_{\mathcal{H}}^2\hspace{-1pt}=\hspace{-1pt}\langle f_X, f_X\rangle \hspace{-1pt}+\hspace{-1pt} \langle \T f_Z, \T f_Z \rangle \hspace{-1pt}-\hspace{-1pt} 2\langle f_X, \T f_Z\rangle\notag\\&\hspace{-1pt}=\hspace{-1pt}\langle f_X, f_X\rangle \hspace{-1pt}+\hspace{-1pt} \langle f_Z,   f_Z \rangle \hspace{-1pt}-\hspace{-1pt} 2\langle f_X, \T  f_Z\rangle,\hspace{-1pt}
\end{align}
%\begin{remark}
%    For an inner product space, an isometry is a map $h$ such that $\langle hx, hy \rangle = \langle x, y\rangle$.
%\end{remark}
%This \lidar point cloud registration (\PCR) problem is typically formulated as a constrained optimization problem
%\begin{equation}\label{PCR}
%\min_{\Tcal} \sum_{\substack{{x_k^i\in X_i}\\{x_p^j\in X_j}\\ i\neq j}}
%d(x_k^i,T_i^jx_p^j),\tag{\PCR}
%\min_{\T\in\mathsf{SE(3)}}  d(X,\T Z),\tag{\PCR}
%\end{equation}
%where $d$ is some distance measure between the representations of the two point clouds. In ICP-based methods, $d$ can be chosen as Euclidean or Mahalanobis distances between  selected closest point pairs. 
%where %the equality is due to $\T\in\manifold$ is an isometry. 
To minimize the distance between the functions $f_X$ and $f_Z$, we maximize the third term containing both point clouds and $\T$. This approach is akin to ICP's objective function, but posed as an optimization problem that no longer requires pair-wise correspondences: 
\begin{align}
%\label{PCRH} 
\max_{\T\in\manifold} \ \ \langle f_X, \T  f_Z\rangle.\label{eq:PCRH}
\end{align}
Using the kernel trick~\cite{zhang2021new}, this loss can be simplified  as %by~\eqref{eq:point_cloud_function}
%\begin{subequations}
\begin{align}\label{eq:rkhs_loss}
    \langle f_X, \T  f_Z\rangle  
    &=\hspace{-7pt} \sum_{\substack{{(\xvec_i, \ellvec^{X})\in X}\\ {(\zvec_j, \ellvec^{Z})\in Z}}} \hspace{-7pt}\langle \ellvec^{X},\ellvec^{Z}\rangle k(\xvec_i, \T^{-1} \zvec_j) \triangleq f(\T),
    \raisetag{15pt}
\end{align} 
%\end{subequations}
with a constant $\langle \ellvec^{X},\ellvec^{Z}\rangle$  encoding the intensity similarity of \lidar points. The isotropic squared exponential kernel is commonly used~\cite{clarkmaani20}, but we have freedom in choosing the kernel embedding, which we will discuss in Sec.~\ref{sec:gencvo}. The classical RKHS-based registrations rely on implementations of 1st-order Riemannian gradient ascents. %This down-weights point pairs that are geometrically close but have different intensities.

%When evaluating the RKHS methods, we consider a full trajectory by chaining the relative poses from pairwise registrations.  In frame-to-frame tracking, $X$ is the point cloud of the last frame, while $Z$ is the incoming scan. Modern \lidar odometry often adopts model-to-frame tracking where $X$ comes from a local dense point cloud  map~\cite{zhang2014loam, xu22fastlio2, zheng2024traj} and $Z$ is the latest frame. This work is entirely compatible with the latter, but demonstrated in frame-to-frame tracking as this more clearly elucidates differences in robustness of the registration problem.

\section{Generalized CVO}\label{sec:gencvo}
To improve  convergence  compared to  prior work discussed in Sec.~\ref{sec:prelim}, we propose the %\textbf{G}eneralized \textbf{CVO} (
\ourmethod registration method. It consists of two main components: 1) an anisotropic kernel embedding, inspired by methods that encode local surface structure, and 2) iterative optimizers that use closed-form expressions of the Riemannian gradients and Hessians. %subject to this kernel embedding.  
Both the 1st- and 2nd-order versions of \ourmethod are sketched in Alg.~\ref{alg:IF}.

\subsection{Anisotropic covariance}
In RKHS-based methods, we have significant freedom in choosing the kernel embedding. Previous formulations proposed the isotropic kernels~\cite{williams2006gaussian} due to their simplicity and smoothness. Observing that point cloud scans are usually surface scans, %, and that tt has improved registration performance in prior ICP-based works~\cite{segal2009gicp, zhang2014loam, koide2021voxelized}, 
we generalize the isotropic kernels in~\cite{MGhaffari-RSS-19} to anisotropic exponential kernels that model local surfaces
\newcommand\yvec{\boldsymbol{y}}
\begin{subequations}\label{eq:kernel}
\begin{align}
    k(\xvec, \zvec)
    &=\sigma^2 \exp(-\dfrac{1}{2}\langle (\xvec - \zvec), \boldsymbol{\cov}(\xvec,\zvec)^{-1} (\xvec - \zvec)\rangle),\notag\\
    \boldsymbol{\cov}(\xvec; \bar X)%^{(0)} 
     &= \dfrac{1}{n-1}\sum_{\yvec \in \mathcal{N}_{\bar X}(\xvec)} (\yvec - \xvec)(\yvec - \xvec)^\top,\\
     \boldsymbol{\cov}(\xvec,\zvec)%^{(0)} 
     &= \boldsymbol{\cov}(\xvec; \bar X) + \R^{\top} \boldsymbol{\cov}(\zvec; \bar Z) \R,
\end{align}\label{eq:covariance}
\end{subequations}
where $\mathcal{N}_{\bar X}(\xvec)$ are the $n$ nearest points in $\bar X$ around  $\xvec$, and $\R$ is the rotation matrix between $X$ and $Z$. In the following, we let $\cov_{ij}\triangleq \cov(\xvec_i, \zvec_j)$, and (i) do a KD-tree ball-based lookup at $\xvec$, (ii) compute the distances to each point in this ball, and (iii) select the nearest $n$ points to constitute this set. 
 This approach generalizes the separate edge and surface loss functions in some ICP-based methods without dedicated feature extractions~\cite{zhang2014loam, shan2018lego}, as shown in Fig.~\ref{fig:kernel_illustration}. %The anisotropic kernel smooths out the points lying on the surfaces and is evaluated over all of the points, without requiring dedicated feature extraction, as in~\cite{zhang2014loam, shan2018lego}.

\begin{remark} 
%Using a sample covariance results in sparse loss functions when bootstrapping the optimization, as the kernel will annihilate the majority of the points in the normal direction before they contribute to the loss. In practice, we bound the largest eigenvalues of $\cov(\xvec,\zvec)$ to constrain the weight ratios between normal and tangential directions.
When the empirical covariance is used in~\eqref{eq:covariance}, the induced anisotropy can render the loss function both sparse and noisy: the kernel attenuates components aligned with high-variance normal directions, which can be detrimental to registration performance. To regularize this degeneracy, we impose upper bounds on the dominant eigenmodes of $\cov(\xvec,\zvec)$, constraining the normal–tangential weighting ratio and ensuring a numerically stable contribution of all geometric directions to the loss.
\end{remark}

%scale the covariance by a scalar $\ell$
%\begin{align}
%\label{eq:kernel_scale}
%    \ell^{(n+1)}&=\ell^{(0)}{\alpha}^{n} \\
%    \boldsymbol{\cov}(\xvec)^{(n+1)} &= \boldsymbol{\cov}(\xvec)^{(0)}\ell^{(n+1)}
%\end{align}
%and 
%gradually shrink the scale $\ell$ by a percentage $\alpha$ during the iterative optimization (see Alg.~\ref{alg:IF}). The scale decay scheme takes place whenever the objective~\eqref{eq:rkhs_loss} converges at the current scale.

%% Hey Ray! I would add this scalar explicitly in (7) and then reference it from Alg 1.

%There are many choices of the kernel function. Previous RKHS-based formulations choose the isotropic square exponential kernels~\cite{williams2006gaussian} because of its simplicity and smoothness.  Observing that point cloud scans are usually surface scans, we use an anisotropic exponential kernel centered at $\xvec_k^i$ to resemble the surface structure of each position of the point clouds:
%\newcommand\yvec{\boldsymbol{y}}
%where $\yvec^i$ are the $n$ neighborhood points around $\xvec_k^i$. In practice, we (i) do a KD-tree ball-based lookup, (ii) compute the distances to each point in this ball, and (iii) select the nearest $n$ points and use these to constitute $\mathcal{N}$. 

%This is a generalization of the seperate edge and surface loss functions in a lot of ICP-based methods, as shown in~\ref{fig:kernel_illustration}. The anisotropic kernel effectively spatially smooth the points lying on the surfaces, while segment out line points as well.

\begin{figure}[t!]
\includegraphics[trim=0cm 0cm 0cm 0cm, clip=true, angle=0, width=0.49\columnwidth]{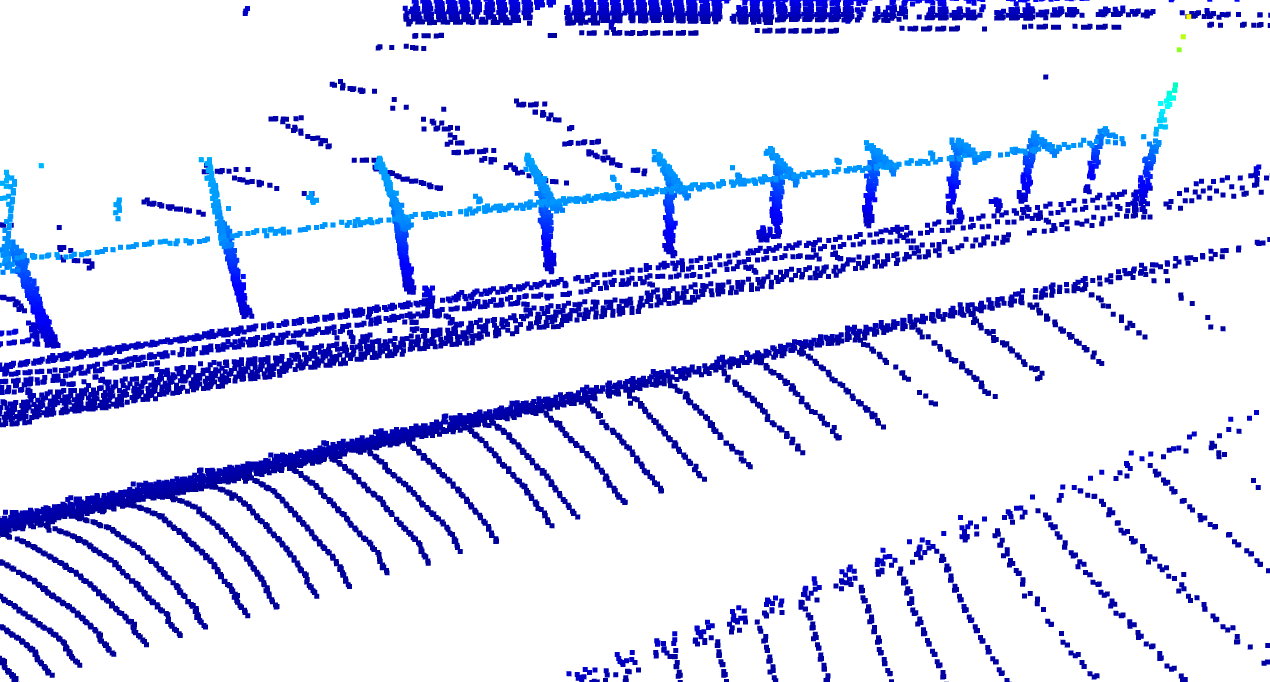}
\includegraphics[trim=0cm 0cm 0cm 0cm, clip=true, angle=0, width=0.49\columnwidth]{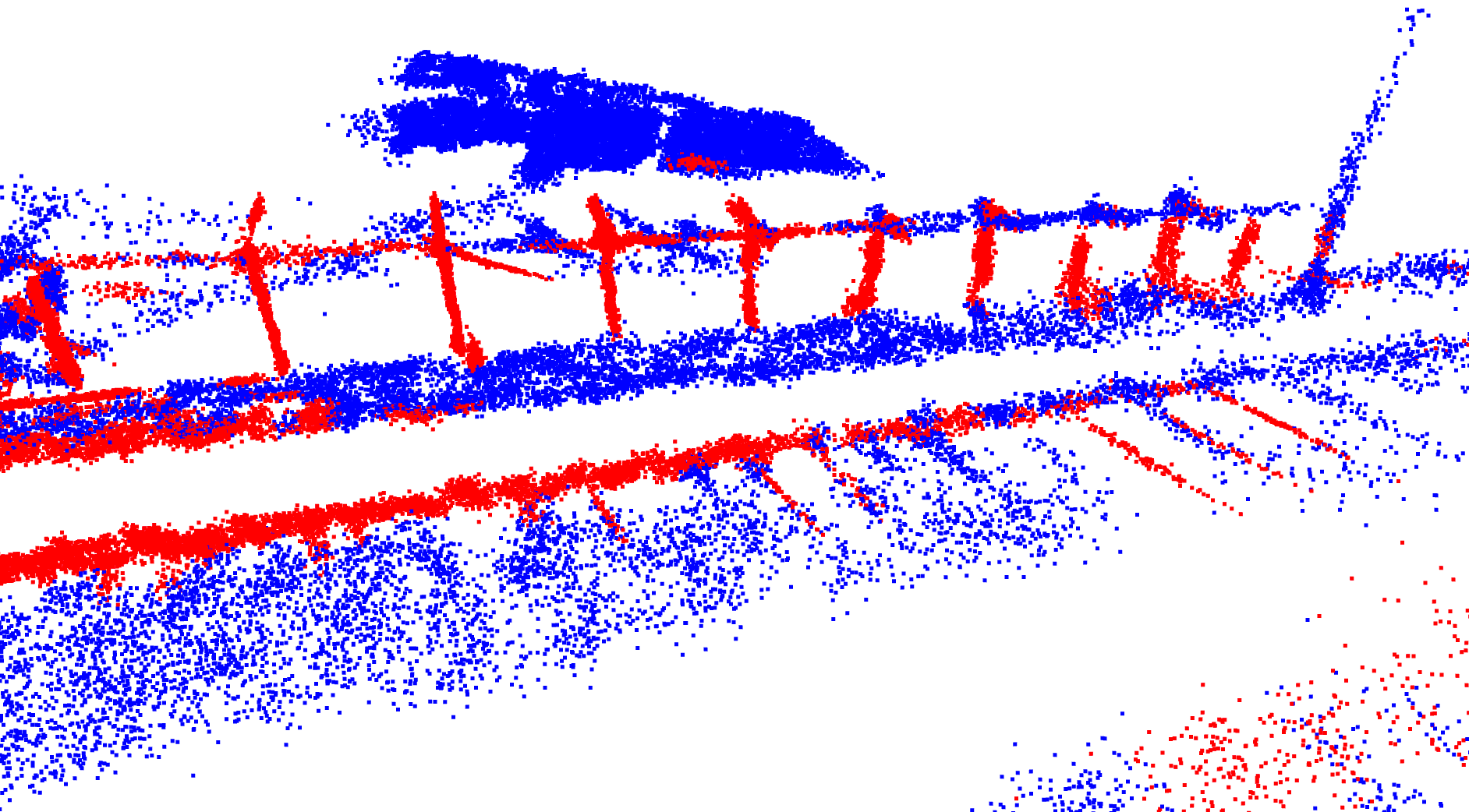}
\caption{Raw point cloud $Z$ (left), and resampled point cloud (right) in red if $\cov(\zvec_i; \bar{Z})$ has two small eigenvalues (edge) and blue if $\cov(\zvec_i; \bar{Z})$ has one small eigenvalue (surface).}
\vspace*{-10pt} 
\label{fig:kernel_illustration}
\end{figure}

%\begin{remark}
%The covariance matrix later has two indices, $k$ and $p$. Here $k\lor p$ if both indices are present. 
%\end{remark}
%\ray{TODO: add a figure illustrating the shape of the edge and surface kernels}

%\subsection{Gradient Ascent with Riemannian Gradients}

%To solve~\eqref{eq:rkhs_loss}, we derive the analytical Riemannian gradient.s %{\color{red}and Hessian}. We investigate two ways of solving it, via first-order gradient ascent and via {\color{red}Newton's method}.

\subsection{Calculating Riemannian gradients}
In a first variant of \ourmethod, we follow~\cite{MGhaffari-RSS-19} and solve \eqref{eq:PCRH} by gradient ascent (hereinafter \ourmethod-1). As the objective is a sum of exponentials on $\manifold$, taking gradients in a Euclidean space disregards the structure of $\manifold$ and can result in suboptimal linearizations~\cite{Grisetti10}. 
\iffalse
\begin{definition}
    For each point $x\in M$, a Riemannian metric $\langle \cdot, \cdot \rangle_x: T_xM\times T_xM\rightarrow \mathbb{R}$ is a map from the tangent space of $M$ at $x$ that varies smoothly with x. For any tangent vector $X, Y\in T_xM$,  the map $x\rightarrow \langle X(x), Y(x)\rangle_M$ is a smooth function on $M$~\cite{boumal2023optmfd}.
\end{definition}

\begin{remark}
    For a smooth function $f: M\rightarrow \mathbb{R}$, for any smooth curve $\gamma: \mathbb{R}\rightarrow M$ passed through a point $\gamma(0)=x$ on $M$ with $\gamma'(0) = v$,  the derivative along direction $v\in T_xM$ is $Df(x)[v]=\dfrac{d}{dt}\bigg|_{t=0}f'(\gamma(t))$~\cite{boumal2023optmfd}.
\end{remark}
\begin{remark}
    A metric on a manifold $M$ is a choice of inner product at $x$ for any $x\in M$, where the inner product $\langle \cdot, \cdot \rangle_x: T_xM\times T_xM\rightarrow \mathbb{R}$ is a symmetric, bilinear and positive definite map from the tangent space of $M$ at $x$ to $\mathbb{R}$~\cite{boumal2023optmfd}.
\end{remark}
\fi
Instead, we take the directional derivative of~\eqref{eq:rkhs_loss} at $\T$  along a perturbation $\xivec^{\wedge}$ by~\eqref{eq:equivalence} to derive the Riemannian gradients with the metric~\eqref{eq:se3_metric}.

\begin{lemma}\label{lem:gradient}
    The Riemannian gradient of~\eqref{eq:rkhs_loss} at $\T\in\manifold$ is $\grad f(\T)$ with $ (\grad f(\T))^{\vee} \triangleq \gvec(\T)$ %\triangleq (( g_{\omegavec}^{\top}, g_{\vvec}^{\top})^{\top})^{\land}
        , where $\gvec(\T)=(g_{\omegavec}^{\top}, g_{\vvec}^{\top})^{\top}\in\mathbb{R}^6$ is $\grad f(\T)$'s  vector representation, % is the Lie-algebra coordinate representation of the Riemannian gradient,  
        $\rvec_{ij}= \xvec_i - \T^{-1} \zvec_j\in \mathbb{R}^3$,  
    \begin{align}\label{eq:gradient}
    g_{\vvec}  \hspace{-2pt}&=\hspace{-2pt}{\color{white}+}-\sum_{ij}   c_{ij}k(\xvec_i, \T^{-1}\zvec_j) \boldsymbol{\cov}_{ij}^{-1} \rvec_{ij}:=\sum_{ij} g_{\vvec}^{(ij)},\hspace{-2pt}\\
     g_{\omegavec}  \hspace{-2pt}&=\hspace{-2pt} -\sum_{ij} c_{ij}k(\xvec_i, \T^{-1}\zvec_j)( \T^{-1} \zvec_j)^{\wedge}  \boldsymbol{\cov}_{ij}^{-1}\rvec_{ij}:=\sum_{ij} g_{\omegavec}^{(ij)}.\hspace{-2pt}\notag
     \end{align}
\end{lemma}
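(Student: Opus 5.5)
The plan is to compute the directional derivative in \eqref{eq:equivalence} along a curve that realizes a left-trivialized tangent vector, and then read off $\gvec(\T)$ through the metric \eqref{eq:se3_metric}.

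\textbf{Setting up the derivative.} Since tangent vectors are left-trivialized, I take $U=\T\xivec^{\wedge}$ with $\xivec=(\omegavec^{\top},\vvec^{\top})^{\top}$, realized by the curve $\boldsymbol{\gamma}(t)=\T\exp(t\xivec^{\wedge})$. Then $\boldsymbol{\gamma}(t)^{-1}=\exp(-t\xivec^{\wedge})\T^{-1}$. Writing $\boldsymbol{p}_j=\T^{-1}\zvec_j$, this gives
\[
\frac{\mathrm{d}}{\mathrm{d}t}\Big|_{t=0}\boldsymbol{\gamma}(t)^{-1}\zvec_j=-\xivec^{\wedge}\boldsymbol{p}_j=-\omegavec^{\wedge}\boldsymbol{p}_j-\vvec=\boldsymbol{p}_j^{\wedge}\omegavec-\vvec .
\]
The residual therefore satisfies $\dot{\rvec}_{ij}=\vvec-\boldsymbol{p}_j^{\wedge}\omegavec$. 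By left-invariance of \eqref{eq:se3_metric}, $\langle\grad f(\T),U\rangle=\gvec(\T)^{\top}\xivec$. So it suffices to write $\frac{\mathrm{d}}{\mathrm{d}t}f(\boldsymbol{\gamma}(t))|_{t=0}$ as a linear form $\boldsymbol{a}^{\top}\omegavec+\boldsymbol{b}^{\top}\vvec$ and identify $g_{\omegavec}=\boldsymbol{a}$, $g_{\vvec}=\boldsymbol{b}$.

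\textbf{Differentiating each kernel term.} For each pair $(i,j)$, write $c_{ij}=\langle\ellvec^X_i,\ellvec^Z_j\rangle$ and $k=\sigma^2\exp(-\tfrac12\rvec_{ij}^{\top}\cov_{ij}^{-1}\rvec_{ij})$. I treat $\cov_{ij}$ as fixed at the current iterate, so its dependence on $\R$ through \eqref{eq:covariance} is not differentiated. The chain rule and symmetry of $\cov_{ij}^{-1}$ then give
\[
\dot k=-k\,\rvec_{ij}^{\top}\cov_{ij}^{-1}\dot{\rvec}_{ij}=-k\,\rvec_{ij}^{\top}\cov_{ij}^{-1}\vvec+k\,\rvec_{ij}^{\top}\cov_{ij}^{-1}\boldsymbol{p}_j^{\wedge}\omegavec .
\]
The $\vvec$-coefficient is $-k\cov_{ij}^{-1}\rvec_{ij}$. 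The $\omegavec$-coefficient is $k(\boldsymbol{p}_j^{\wedge})^{\top}\cov_{ij}^{-1}\rvec_{ij}=-k\,\boldsymbol{p}_j^{\wedge}\cov_{ij}^{-1}\rvec_{ij}$, because the hat matrix is skew-symmetric. Multiplying by $c_{ij}$ and summing over $i,j$ gives exactly \eqref{eq:gradient}, with the per-pair terms $g^{(ij)}_{\vvec}$ and $g^{(ij)}_{\omegavec}$.

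\textbf{Main obstacle.} The main difficulty is not the calculus but keeping conventions consistent. Three choices must agree:
\begin{itemize}
\item the perturbation must be right-multiplied, $\T\exp(t\xivec^{\wedge})$, to match the left-trivialization;
\item the induced perturbation of $\T^{-1}$ is $\exp(-t\xivec^{\wedge})$, which fixes the signs;
\item the twist ordering is $(\omegavec,\vvec)$ in the vee map.
\end{itemize}
A wrong choice flips signs or introduces an extra $\R$ factor.

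The other point to state explicitly is that the covariance is frozen. The formula in \eqref{eq:gradient} omits the term from $\frac{\mathrm{d}}{\mathrm{d}t}\cov_{ij}^{-1}$ arising from $\R^{\top}\cov(\zvec;\bar Z)\R$. I would justify this as the standard convention of evaluating $\cov_{ij}$ at the current iterate, as in GICP-style methods; the lemma holds exactly under that convention.
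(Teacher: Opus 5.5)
Your proposal is correct and follows essentially the same route as the paper's proof. Both use the right perturbation $\T\exp(t\xivec^{\wedge})$, differentiate each kernel term with $\cov_{ij}$ held fixed, and read off $g_{\omegavec}$ and $g_{\vvec}$ through the left-invariant metric \eqref{eq:se3_metric}. Your explicit note that the formula is exact only with $\cov_{ij}$ frozen, i.e.\ ignoring its dependence on $\R$, is a caveat the paper leaves implicit.
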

\begin{proof}
The proof is in the supplementary materials, and follows from defining $\boldsymbol{\gamma}(t) = \T\exp(t\xivec^{\wedge})$, evaluating $\mathrm{D}f(\boldsymbol{\gamma}(t) )[\xivec^\wedge]$, and using~\eqref{eq:equivalence} to identify the gradients.
\end{proof}

\begin{algorithm}[t!]
\caption{Generalized CVO (\textbf{\ourmethod}).}
\label{alg:IF}
\begin{algorithmic}[1]
\State \textbf{receive} $\T^{(0)}$,  $\mu_0$, $k_{\max}$, $\epsilon_{\mathrm{tol}}$, $X$, $Z$
%\State Estimate point-wise covariance for $X$ and $Z$ 
\While{$\|\xivec^{(k)}\|_1\geq \epsilon_{\mathrm{tol}}$ \textbf{and} $k < k_{\max} $}
\State Transform $Z$ with ${(\T^{(k)})}^{-1}$\label{alg:transform}
\State Compute $\{\cov_{ij}\}$ with $\R$ from $\T^{(k)}$\hfill\eqref{eq:covariance}
\State Evaluate $f$ at $\T^{(k)}$ using $\{\cov_{ij}\}_{ij}$\hfill\eqref{eq:rkhs_loss}
%\State Evaluate $\nabla_{\xivec} F$ at $\T^{(k)}$ \hfill\eqref{eq:vgrad}
\State Evaluate $\grad f(\T^{(k)})$ \hfill\eqref{eq:gradient}
\If{using 1st-order update}
\State Evaluate $\mu^{(k)}$ using a line-search \hfill\eqref{eq:linesearch}
\State  $ {\xivec}^{(k)} = \mu^{(k)} \grad f(\T^{(k)})^{\lor} $\hfill~\eqref{eq:fo_update}
\Else
\State Evaluate $\hess_{\mathrm{GN}}f(\T^{(k)})$ \hfill\eqref{eq:hessian_gn}
\State $ {\xivec}^{(k)} \hspace{-2pt}=\hspace{-2pt} -\hess_{\mathrm{GN}}f(\T^{(k)})^{-1}\grad f(\T^{(k)})^{\lor}$\hspace{-2pt}\hfill \eqref{eq:gn_update}
\EndIf
\State Update $\T^{(k+1)}\leftarrow  \T^{(k)}\exp( [{\xivec}^{(k)}]^{\land})$
\State $k\leftarrow k+1$
%\If{the objective~\eqref{eq:rkhs_loss} stabilizes}
%\If{$\|\T^{(k+1)}(\T^{(k)})^{-1} -  \boldsymbol{I}\|_F\leq \epsilon_{\mathrm{step}}$}
%\If{$F$ has converged}
%\State $\ell \leftarrow \alpha\ell$
%\EndIf
\EndWhile
\State \textbf{return} $\T^{(k)}$
\end{algorithmic}
\end{algorithm}

%\subsection{Momentum with Riemannian Gradients}
%Gradient descent with momentum has shown better convergence speed in several applications~\cite{qian1999momentum, campos2024momentum}. In the problem (\PCRH), the step is a Lie algebra $\xivec^{\wedge}\in \mathfrak{se}(3)$, and the gradient from two different iterations lie in two different tanget spaces, i.e. $\xivec^{(k-1)}\in T_{\T^{(k-1)}}\mathsf{SE}(3), \xivec^{(k)}\in T_{\T^{(k)}}\mathsf{SE}(3)$. To interpolate the two gradients, we need to transport the gradient from the previous iteration's tangent space into the current iteration's %with a Parrallel Transport~\cite{boumal2023optmfd}, via the Adjoint operator~\cite{campos2024momentum}
%\begin{equation}\label{eq:momentum}
%\bar{\xivec}^{(k)} = (1-\gamma)\xivec^{(k)} + \gamma \Ad_{\exp{ (\T^{(k-1)})^{-1} \T^{(k)}}} \xivec^{(k-1)},
%\end{equation} 
%where $\gamma>0$ is a hyperparameter. In Sec.~\ref{sec:exp:ablations}, we explore the influence of this parameter in a set of ablation studies.

\subsection{A first-order method}
With Lemma~\ref{lem:gradient}, we can implement first-order optimizers to solve~\eqref{eq:PCRH} with the anisotropic kernels in~\eqref{eq:kernel}. To perform an update in the ascent direction, we calculate the step size $\mu$ using a line search along the lines of~\cite{clarkmaani20}. Specifically, we use a fourth-order expansion of~\eqref{eq:rkhs_loss} in the direction $\xivec^{\wedge}$:
\begin{align}
  G(t)\triangleq f(\T\exp(t\xivec^{\wedge})) \approx 
\sum_{ij} 
c_{ij} k_{ij} 
\sum_{k=0}^4 g^{k}_{ij} t^k,
\end{align} 
with the factors $g_{ij}^k$ in the supplementary materials. We find the maximum value of this polynomial near 0 by computing the roots of  $G(t)'$ in $t$.
\begin{subequations}\label{eq:linesearch}The smallest positive real root is chosen. If none are found, a minimum step $\mu_0>0$ is used
\begin{align}
t^{\star}&=\min\{t\in\mathbb{C}\;|\;G^{\prime}(t)=0, \mathrm{imag}(t) = 0\},\\
\mu &= \max\{\mu_0, t^{\star}\}.
\end{align}
\end{subequations}
The Riemannian gradient ascent step is implemented as% as an iteration
\begin{align}
\label{eq:fo_update}
\T^{(k+1)} = \T^{(k)}\exp(\mu^{(k)}\grad f(\T^{(k)})),
\end{align}
where $\mu^{(k)}$ is computed by~\eqref{eq:linesearch} at $\T^{(k)}$. % We refer to this method as \ourmethod-1, and give an algorithm sketch in Alg.~\ref{alg:IF}.

\subsection{A second-order method}
The first-order solver can be slow, as shown in Sec.~\ref{sec:exp:ablations}, when the same step size is applied to all dimensions of $\T$ without using curvature information.  As~\eqref{eq:rkhs_loss} is observed locally quadratic on $\manifold$ (see Sec.~\ref{sec:exp:lossslices}), we propose a Riemannian Gauss-Newton (GN) method using an approximated Riemannian Hessian. In each iteration, we first transform $Z$ with the current estimate as $\T^{-1}Z$. Then we denote 
\begin{align}
\label{eq:f_ij_J_j}
f_{ij} &\triangleq c_{ij}k(\xvec_i, \T^{-1}\zvec_j)
, \quad J_{j}\triangleq \begin{bmatrix}
-(\T^{-1} z_j)^{\wedge} & \idd
\end{bmatrix}\in\mathbb{R}^{3\times 6},\nonumber
\end{align}
%and note that 
\iffalse
\begin{align}
    \mathrm{D}(\rvec_{ij})[\xivec^{\wedge}]=\xivec^{\wedge} \T^{-1}\zvec_j,\quad \mathrm{D}(f_{ij})[\boldsymbol{\xi}^{\wedge}]=\langle \begin{bmatrix}
        g_{\omegavec}^{(ij)}\\
        g_{\vvec}^{(ij)}
\end{bmatrix}^{\wedge},\boldsymbol{\xi}^{\wedge}\rangle .
\end{align}
\fi
\vspace{-20pt}
\begin{align}
    \mathrm{D}(\rvec_{ij})[\xivec]&=J_j\xi %\xivec^{\wedge} \T^{-1}\zvec_j,%\\ %\mathrm{D}(f_{ij})[\boldsymbol{\xi}^{\wedge}]&=\langle ({g_{\omegavec}^{(ij)}}^{\top},{g_{\vvec}^{(ij)}}^{\top})^{\top})^{\wedge},\boldsymbol{\xi}^{\wedge}\rangle
\end{align}
%\langle J_j, \xivec^{\wedge}\rangle.  
%with the gradients defined in~\eqref{eq:gradient}. 
\iffalse
Following the vector-form Hessian  in~\eqref{eq:hess_se3_vec}, by the product rule,
\begin{subequations}
\begin{align}
    \mathrm{D}(&\gvec f(\T))[ \xivec^{\wedge}]=\sum_{ij}  J_{j}^{\top}\cov_{ij}^{-1}\rvec_{ij}\mathrm{D}(f_{ij})[ \xivec^{\wedge}] + \\
    &   f_{ij}\mathrm{D}(J_j^{\top})[\xivec^{\wedge}] \cov_{ij}^{-1}\rvec_{ij}  +  f_{ij}J_j^{\top} \mathrm{D}(\cov_{ij}^{-1})[\xivec^{\wedge}]\rvec_{ij}\\&+f_{ij}J_j^{\top}\cov_{ij}^{-1}\mathrm{D}(\rvec_{ij})[\xivec^{\wedge}].   \label{eq:direction_derivative_gradf}
\end{align}
\end{subequations}
\fi
By~\eqref{eq:hess_se3_vec}, it suffices to derive $\mathrm{D}(\gvec (\T))[ \xivec]$ in left-trivialized coordinates and then append the connection term from~\eqref{eq:koszul}.
\begin{subequations}
\begin{align}
\mathrm{D}(&\gvec (\T))[ \xivec]=-(\sum_{ij}  J_{j}^{\top}\cov_{ij}^{-1}\rvec_{ij}\mathrm{D}(f_{ij})[ \xivec] + \\
    &   f_{ij}\mathrm{D}(J_j^{\top})[\xivec] \cov_{ij}^{-1}\rvec_{ij}  +  f_{ij}J_j^{\top} \mathrm{D}(\cov_{ij}^{-1})[\xivec]\rvec_{ij}\\&+f_{ij}J_j^{\top}\cov_{ij}^{-1}\mathrm{D}(\rvec_{ij})[\xivec]).   
\label{eq:direction_derivative_gradf}
\end{align}
\end{subequations}
We approximate this directional derivative by the dominant last term in~\eqref{eq:direction_derivative_gradf}, as discussed in Remark~\ref{rem:approx}, resulting in
\begin{subequations}\label{eq:hessian_gn}
\begin{align}
\hspace{-6pt}(\hess& f(\T)[ \xivec^{\wedge}])^{\vee}\approx
(\hess_{\mathrm{GN}}f(\T)[ \xivec^{\wedge}])^{\vee}\\
&:=\hspace{-2pt}-\sum_{ij}
%H_{\mathrm{GN}}f(\T)_{ij}[ \xivec^{\wedge}]
f_{ij}J_{j}^{\top}\cov_{ij}^{-1}   J_{j} \xivec 
+ 
\Gamma^{\vee}(\xivec^{\wedge}\hspace{-2pt}, \grad f(\T)),\hspace{-4pt}
%g_{\mathrm{GN}}&=\Ad_{h}\sum_{ij}J^{\top}_{ij}\cov_{ij}^{-1}r_{ij}
\end{align}
\end{subequations}
where, by Lemma~\ref{lem:gradient} and~\eqref{eq:gamma_se3}, the  correction term is
\begin{align}
  \Gamma^{\vee}(  \xivec^{\wedge}, \grad f(\T))
  %=& 
  %\dfrac{1}{2}\Big([\xivec^{\wedge}, \grad f(\T)]-\ad_{\xivec}^{*}\grad f(\T)\\ 
  %\nonumber
  %&\quad\quad\quad-\ad_{\grad f(\T)}^*\xivec \Big)\\
  %\nonumber
  %=&
  %\dfrac{1}{2}\Big(-\ad_{\grad f(\T)} \xivec +\begin{bmatrix}
  %     0 & 0\\
  %    -g_v^{\wedge} &  g_{\omega}^{\wedge}
  %\end{bmatrix} \xivec \Big)\\
  =&\begin{bmatrix}
       -\frac{1}{2}g_{\omegavec}^{\wedge} & 0\\
      -g_{\vvec}^{\wedge} & 0
  \end{bmatrix}\xivec.\label{eq:connection_se3}
\end{align}
%Its derivation is in the supplemental material.
%Near local minima with small motions, we can also approximate $\Gamma(\xivec, \grad f(\T))\approx 0$ (\cite{boumal2023optmfd}, Proposition 5.3). 
As we are maximizing the object, 
the step update of the second-order solver is implemented as
\begin{align}
\label{eq:gn_update}
\hspace{-2pt}\T^{(k+1)} \hspace{-2pt}=\hspace{-2pt}\T^{(k)}\hspace{-2pt}\exp([-(\hess_{\mathrm{GN}}f(\T^{(k)}))^{-1} \grad f(\T^{(k)})^{\lor}]^{\land}).
\end{align}
We refer to~\eqref{eq:gn_update} as \ourmethod-2, as summarized in Alg.~\ref{alg:IF}.

\begin{remark}\label{rem:approx}
In the implementation of \ourmethod-2, the covariances $\boldsymbol{\Sigma}_{ij}$ are updated once per iteration and treated as constants within each step update. We further retain only the last term in~\eqref{eq:direction_derivative_gradf}. This inexact Hessian approximation is motivated by inexact GN schemes~\cite{gratton2007approximate} and supported by the KITTI ablation results in Sec.~\ref{sec:exp:ablations}: using the exact Hessian in~\eqref{eq:gn_update} provides marginal gains in convergence speed while increasing per-iteration cost, resulting in higher overall compute time for solving~\eqref{eq:PCRH}.
\end{remark}

\begin{remark}\label{rem:CVO}
By instead letting $\cov(\xvec,\zvec)\equiv \idd$ for all $\xvec,\zvec\in\mathbb{R}^3$, we obtain the kernel formulations of CVO~\cite{clarkmaani20} and SemanticCVO~\cite{zhang2021new}. As such, the exact and approximate GN method of \ourmethod can be applied in these settings as well.
\end{remark}
\begin{remark}
%$\grad f$ is generally not a left-invariant vector field for the function $f$ defined on $\mathsf{SE(3)}$. However,  in  Algorithm~\ref{alg:IF} (line~\ref{alg:transform}), we transform the point cloud $Z$ with the last iteration's estimate $(\T^{(k)})^{-1}$, so the new iteration's registration  takes place between $X$ and $(\T^{(k)})^{-1} Z$, from identity. We can thus avoid translating the vector fields even if the registration result is far from the identity.
In general, $\grad f$ is not a left-invariant vector field for a function $f$ defined on $\manifold$. However, in Alg.~\ref{alg:IF}, we re-express the problem at each iteration by transforming $Z$ with the inverse of the previous iterate, $(\T^{(k)})^{-1}$. Consequently, each registration step is performed from the identity, aligning with $(\T^{(k)})^{-1}Z$. This approach avoids transporting the gradients from $T_{\T^(k)}\mathcal{G}$ back to $\liealgebra$. %on $\manifold$ using the adjoint operators in~\eqref{eq:adjoints}.
\end{remark}

\section{Experiments}

\newcommand\bestcolor{black!20}
\newcommand\secondbestcolor{black!10}
\newcommand\cellcoloring[3]{
\edef\tempcoord{#2-#3}%
\rectanglecolor{#1}{\tempcoord}{\tempcoord}}

\begin{table*}[t]
\caption{Registration errors (translation and rotation) for different methods on standard KITTI sequences.}
\vspace{-10pt}

\centering
\resizebox{\textwidth}{!}{\begin{NiceTabular}{c|cc|cc|cc|cc|cc|cc|cc|cc}[colortbl-like,
code-before =% 
{
    \cellcoloring{\bestcolor}{14}{2}%
    \cellcoloring{\bestcolor}{15}{2}%
    \cellcoloring{\secondbestcolor}{14}{4}%
    \cellcoloring{\secondbestcolor}{15}{4}%
    \cellcoloring{\secondbestcolor}{14}{3}%
    \cellcoloring{\secondbestcolor}{15}{3}%
    \cellcoloring{\bestcolor}{14}{7}%
    \cellcoloring{\bestcolor}{15}{7}%
}
]
\toprule
& \multicolumn{2}{c}{\ourmethod-2 (\textbf{Ours})} 
& \multicolumn{2}{c}{\ourmethod-1 (\textbf{Ours})} 
& \multicolumn{2}{c}{Fast-VGICP~\cite{koide2021voxelized}} 
& \multicolumn{2}{c}{NDT~\cite{magnusson2009three}} 
& \multicolumn{2}{c}{GICP~\cite{segal2009gicp}} 
& \multicolumn{2}{c}{ICP~\cite{besl1992method}} 
& \multicolumn{2}{c}{CVO~\cite{MGhaffari-RSS-19, zhang2021new} (1st-order)} & \multicolumn{2}{c}{GeoTransformer~\cite{qin2023geotransformer}}\\
Sequence 
& Trans. & Rot. 
& Trans. & Rot. 
& Trans. & Rot. 
& Trans. & Rot. 
& Trans. & Rot. 
& Trans. & Rot. 
& Trans. & Rot. 
& Trans. & Rot.\\
\midrule
% Code-before to find and highlight min in each column
\texttt{00} &1.0649 &	0.0058 & 1.0060 & 0.0057 & 1.1653 & 0.0055 & 57.6272 & 0.2386 & 28.2713 & 0.0249 & 6.7379 & 0.0291 &1.8087	&0.0128 & - & -\\
\texttt{01} & 2.0752&	0.0066 & 1.9665 & 0.0065 & 8.5028 & 0.0080 & 94.7583 & 0.0928 & 94.1168 & 0.0825 & 21.3254 & 0.0158& 4.3708 &	0.0148 & - & -\\
\texttt{02} &1.4174 &	0.0062 &  1.4452 &	0.0072 & 1.6059 & 0.0055 & 68.3716 & 0.2149 & 34.4085 & 0.0483 & 9.5617 & 0.0307 & 1.6152&	0.0087 & - & -\\
\texttt{03} & 1.5743 &	0.0087 &2.7747 & 0.0179 & 1.1627 & 0.0073 & 87.5822 & 0.1238 & 1.7717 & 0.0058 & 10.3611 & 0.0449 & 2.1206&	0.0120 & - & -\\
\texttt{04} & 0.4718&	0.0070 & 0.4296 & 0.0050 & 1.3032 & 0.0078 & 97.1345 & 0.0083 & 100.3140 & 0.0034 & 9.3629 & 0.0404 & 1.2642&	0.0152 & - & -\\
\texttt{05} & 1.0557&	0.0060 &2.1047 & 0.0116 & 1.0113 & 0.0053 & 54.8189 & 0.2458 & 27.6153 & 0.0092 & 3.4138 & 0.0160 & 1.1743&	0.0075 & - & -\\
\texttt{06} & 1.1641&	0.0070 &0.7694 & 0.0066 & 0.8082 & 0.0049 & 59.0391 & 0.2848 & 29.2058 & 0.0036 & 4.7885 & 0.0192 &0.8103&	0.0079 & - & -\\
\texttt{07} & 1.1106 &	0.0065 & 0.5777 & 0.0045 & 0.7967 & 0.0052 & 57.6573 & 0.3708 & 33.9593 & 0.0051 & 4.0295 & 0.0262 & 1.0623&	0.0095 & - & -\\
\texttt{08} & 1.3356&	0.0055& 1.5383 & 0.0068 & 1.2516 & 0.0053 & 57.6909 & 0.2112 & 24.6989 & 0.0051 & 9.6332 & 0.0448 & 1.9684&	0.0103 & 6.8211  & 0.0189\\
\texttt{09} & 2.3890&	0.0091& 0.9627 & 0.0047 & 1.1934 & 0.0052 & 71.9249 & 0.2264 & 25.7434 & 0.0251 & 8.5134 & 0.0308 &3.6727&	0.0164 & 8.7523 & 0.0216\\
\texttt{10} & 1.6250&	0.0076& 1.7445 & 0.0089 & 1.3606 & 0.0055 & 78.7093 & 0.2092 & 12.4857 & 0.0070 & 10.2373 & 0.0377 & 1.8504&	0.0128 & 10.8313 & 0.0320\\
\midrule
Mean   & 1.3894 &	0.0069  & 1.3927 & 0.0078 & 1.8329 & 0.0060 & 71.3922 & 0.2024 & 37.5082 & 0.0200 & 8.9059 & 0.0305 &1.9744	&0.0116  & - & -\\
Std. & 0.5244 &	0.0012 &0.7217 & 0.0039 & 2.2245 & 0.0011 & 15.8960 & 0.0975 & 31.0308 & 0.0250 & 4.8482 & 0.0107 & 1.1024	&0.0031 & - & -\\
\bottomrule
\end{NiceTabular}}
\label{tab:kitti_results}
\end{table*}

We implement \ourmethod in CUDA and evaluate all methods on a computer equipped with an RTX 4090 GPU and  Intel Xeon CPU at 2.70GHz. We evaluate the proposed \ourmethod method on several public and private datasets, designed to benchmark the method on different sensor modalities with varying degrees of feature-sparsity and outliers. We primarily consider frame-to-frame tracking in consecutive frame pairs rather than a full visual odometry pipeline, %, and evaluate the average rotational and translational drift per $100$m, $200$m, $400$m, $800$m, as defined in the KITTI official development tool~\cite{Geiger2012kitti}. 
%First, we give intuition on the \ourmethod loss function in Sec~\ref{sec:exp:lossslices}. 
and evaluate \ourmethod in feature-rich urban driving scenarios in Sec.~\ref{sec:exp:featurerich} and more challenging feature-sparse vehicle racing environments in Sec.~\ref{sec:exp:featuresparse}. In addition, we evaluate tracking performance in an indoor RGB-D dataset in Sec.~\ref{sec:eth3d}. We study convergence and compute times in Sec.~\ref{sec:exp:ablations}, respectively, and present object registration results in Sec.~\ref{sec:object}.

\begin{figure}[t!]
\label{fig:kitti_00}
\centering
\begin{tikzpicture}
    %\node[anchor=north west] at (0,0) {\includegraphics[trim=0cm 0cm 0cm 0cm, clip=true, angle=0, width=0.45\textwidth]{fig5a_2D_kitti.pdf}};
    %\iffalse
    \node[anchor=north west] at (0,0) {\includegraphics[trim=0cm 0cm 0cm 0cm, clip=true, angle=0, width=0.35\textwidth]{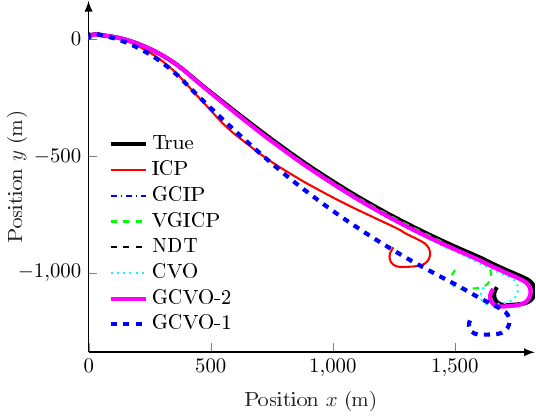}};
    \node[anchor=north west, draw=black, ultra thick, inner sep=0pt] at (4.8,0) {\includegraphics[trim=2.3cm 2cm 2cm 2cm, clip=true, angle=0, width=0.2\textwidth]{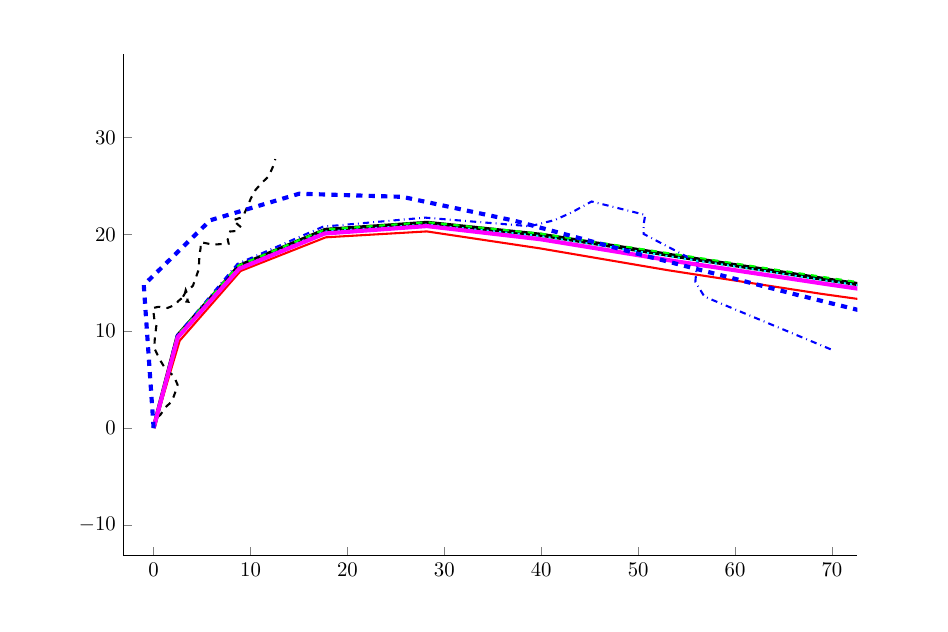}};
    \draw[black, thick] (1.15,-0.45) rectangle ++(0.3,-0.15);
    \draw[black, thick] (1.45,-0.525) -- (4.8,-1.1);
    %\fi
\end{tikzpicture}
\vspace*{-25pt}

\caption{\ourmethod versus baselines on KITTI sequence \texttt{01}.
}
\vspace*{-10pt} 
\label{fig:kitti_lidar_plots}
\end{figure}

\subsection{\lidar Tracking in Feature-rich Scenes}\label{sec:exp:featurerich}
First, we consider frame-to-frame registration for sequential frame pairs in the KITTI \lidar dataset~\cite{Geiger2012kitti}. This dataset contains $11$ \lidar sequences with ground truth poses in urban driving scenarios. Each frame is downsampled with a voxel filter of $0.25$m. %To evaluate registration performance   without  interference from the mapping frameworks, the trajectory is built with frame-to-frame registration on two consecutive frames. % The metrics used here are 
As evaluation metrics, we consider KITTI odometry benchmark's official Translational Error and  Rotational Error, computed over various subsequence lengths ($100$m, $200$m, $400$m, $800$m).   We compare \ourmethod to various ICP baselines, including ICP~\cite{besl1992method},  GICP~\cite{segal2009gicp} implemented with PCL~\cite{rusu20113d},  %more modern 
Fast-VGICP~\cite{koide2021voxelized}, and a GMM-based method NDT~\cite{magnusson2009three}. We also include the classical CVO~\cite{clarkmaani20, zhang2021new} as a correspondence-free baseline. We include GeoTransformer~\cite{qin2023geotransformer}, a learning-based method with strong global registration performance, but only report its test sequences \texttt{08-10} (training sequences are omitted).

Quantitative results are summarized in Table~\ref{tab:kitti_results}, with qualitative results on sequence \texttt{01} % and \texttt{01} 
 in~Fig.~\ref{fig:kitti_lidar_plots}. \ourmethod reduces translation errors and is on par with Fast-VGICP with respect to  rotational errors. %In {\color{red}8 out of the 11 sequences}, \ourmethod has the best translation drift errors.
 In particular, for the less feature-dense highway sequences \texttt{01} and \texttt{04} where correspondences are difficult to find, the proposed method outperforms the baselines by a substantial margin.

\def\sep{0.1}
\def\wid{2.9}
\def\hgtA{2}
\def\hgtB{3}

\begin{figure}[t!]
    \includegraphics[width=0.4\textwidth]{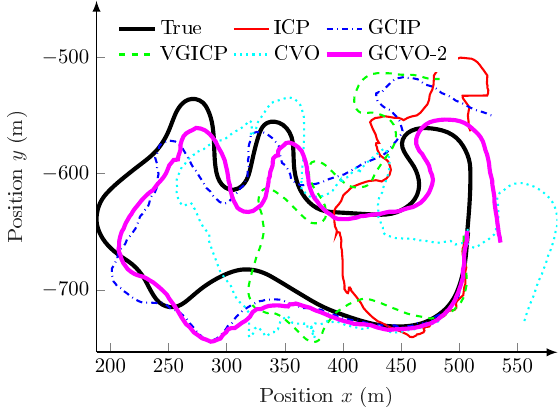}
    \includegraphics[width=0.4\textwidth]{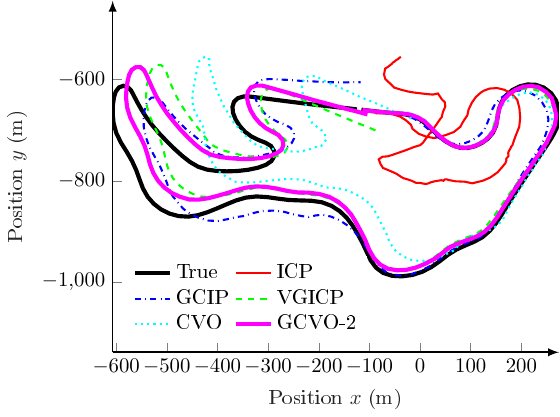}
    \vspace{-10pt}
    
    \caption{Ground truth and frame-to-frame tracking with \ourmethod-2 on the racing dataset (\texttt{Dirt Track} top, \texttt{Race track} bot.).}
    \vspace*{-10pt} 
    % (top) and \texttt{Race Track} (bottom).}
    \label{fig:qualitativeRaceResults}
\end{figure}

\iffalse
\begin{figure*}[t!]
    \centering
    \resizebox{\textwidth}{!}{\begin{tikzpicture}[every text node part/.style={align=center}]
    \foreach \i/\label in {
    0/dirt_track.png, 
    1/dirt_track_illustraiton.png,
    2/race_track_illustration.png,
    3/th_2mile.png
    } {
        \node[anchor=south west] at ({\i*\wid+\i*\sep},{0}) {\includegraphics[width=2.9cm, height=2cm]{\label}};
    }
    \foreach \i/\label in {
    0/fig4a_2D_thunderhill.pdf,
    1/fig4b_2D_thunderhill.pdf} {
        \node[anchor=north west] at ({2*\i*\wid+2*\i*\sep},{0.15cm}) {\includegraphics[width=5.9cm]{\label}};
    }
    %\foreach \i/\label in { 0/{Dirt Track}, 1/{Race Track}} {
    %    \node[] at (2*\i*\wid+2*\i*\sep+\wid+0.5*\sep, \hgtA+0.3) {\label};
    %}
    \end{tikzpicture}}
    \vspace{-20pt}
    
    \caption{Performance in the feature-sparse experiments. Camera and \lidar (top) and the OxTS trajectory (black) with frame-to-frame trajectories with the baselines and \ourmethod (\textcolor{magenta}{magenta}) for the  \texttt{Dirt Track} (left) and \texttt{Race Track} (right).}
    \label{fig:qualitativeRaceResults}
\end{figure*}
\fi

\newcommand\tmpnum{{\color{red}XX.XX}}
\begin{table*}[t]
\centering
\caption{Registration errors (translation and rotation) for different methods on the three feature-sparse racing sequences.}
\vspace{-10pt}

\resizebox{0.9\textwidth}{!}{\begin{NiceTabular}{c|cc|cc|cc|cc|cc|cc}[colortbl-like,
code-before =%
{
    % trans errors
    \cellcoloring{\bestcolor}{3}{6}%
    \cellcoloring{\bestcolor}{4}{2}%
    \cellcoloring{\bestcolor}{5}{2}%
    \cellcoloring{\bestcolor}{6}{2}%
    \cellcoloring{\bestcolor}{7}{2}%
    \cellcoloring{\secondbestcolor}{3}{2}%
    \cellcoloring{\secondbestcolor}{4}{8}%
    \cellcoloring{\secondbestcolor}{5}{6}%
    \cellcoloring{\secondbestcolor}{6}{12}%
    \cellcoloring{\secondbestcolor}{7}{12}%
     % rot errors
    \cellcoloring{\bestcolor}{3}{7}%
    \cellcoloring{\bestcolor}{4}{3}%
    \cellcoloring{\bestcolor}{5}{3}%
    \cellcoloring{\bestcolor}{6}{3}%
    \cellcoloring{\bestcolor}{7}{3}%
    \cellcoloring{\secondbestcolor}{3}{3}%
    \cellcoloring{\secondbestcolor}{4}{13}%
    \cellcoloring{\secondbestcolor}{5}{7}%
    \cellcoloring{\secondbestcolor}{6}{13}%
    \cellcoloring{\secondbestcolor}{7}{13}%
}
]
\toprule
& \multicolumn{2}{c}{\ourmethod-2 (\textbf{Ours})} 
& \multicolumn{2}{c}{\ourmethod-1 (\textbf{Ours})} 
& \multicolumn{2}{c}{Fast-VGICP~\cite{koide2021voxelized}}  
& \multicolumn{2}{c}{GICP~\cite{segal2009gicp}} 
& \multicolumn{2}{c}{ICP~\cite{besl1992method}} 
& \multicolumn{2}{c}{CVO (2nd-order)}\\
Sequence 
& Trans. & Rot. 
& Trans. & Rot. 
& Trans. & Rot. 
& Trans. & Rot. 
& Trans. & Rot. 
& Trans. & Rot. \\
\midrule
% Code-before to find and highlight min in each column
\texttt{Skid Pad} &2.5076 &	0.0149	& 7.1536 & 0.0404 & 1.8999&	0.0138&		29.9657&	0.4065&		30.4623&	0.0917&		4.2126&	0.0270 \\
\texttt{Dirt Track} & 6.0791	&0.0167	& 20.8969 & 0.0772 &	26.3557	&0.0807&	9.7605&	0.0557	&	33.1731	&0.1404&		15.0181&	0.0492 \\
\texttt{Race Track} & 4.3702&	0.0101& 31.5965 & 0.0479 &		6.3208&	0.0133&		8.4150&	0.0193	&	40.1477&	0.0667	&	10.4601&	0.0183\\
\midrule
Mean     & 4.3189	&0.0139& 19.8823&	0.0552 &		11.5255&	0.0359&	16.0471&	0.1605&	34.5944&	0.0996&	9.8969&	0.0315 \\
Std. & 1.7863	&0.0034& 12.2530 &	0.0195	&13.0322&	0.0388&	12.0727	&0.2138	&	4.9967&	0.0375&		5.4247&	0.0159\\
\bottomrule
\end{NiceTabular}}
\label{tab:race_results}
\end{table*}

\subsection{\lidar Tracking in Feature-sparse Scenes}\label{sec:exp:featuresparse}
To evaluate the robustness of \ourmethod in high-speed, feature-sparse environments, we collect a dataset with a Velodyne 128-beam \lidar at a closed-course race track. %\footnote{Thunderhill Raceway is located at 5250 CA-162, Willows, CA, USA.}
The dataset consists of three sequences with ground truth poses from the RTK GPS system~\cite{oxts2024}, including one sequence on a large empty skid pad (\texttt{Skid Pad}), one sequence on a 2-mile paved racetrack (\texttt{Race Track}), and one sequence on a dirt track (\texttt{Dirt Track}). Images from an on-board camera and sample point clouds are provided in the supplementary materials. To achieve convergence times below $100$ms, we downsample each frame using a voxel size of $2$m. As an additional baseline, we run the classical CVO with isotropic kernels using the proposed second-order solver (see Remark~\ref{rem:CVO}). All the solvers are limited to a maximum of $200$ iterations, mirroring typical \lidar odometry settings~\cite{xu22fastlio2}. We use the same standard KITTI metrics as Sec.~\ref{sec:exp:featurerich}. Quantitative results are reported in Table~\ref{tab:race_results}, using a subset of the baselines in Sec.~\ref{sec:exp:featurerich}.

In the \texttt{Skid Pad} sequence, few features fall within the \lidar’s effective range, leaving the registration vulnerable to incorrect matches~\cite{besl1992method,segal2009gicp}. The \texttt{Race Track} represents a different challenge due to high-speed driving, leading to large movements between consecutive frames. Furthermore, the sequence covers long distances (2 miles), making frame-to-frame registrations prone to long-term drifts. %Despite this, \ourmethod-2 results in good registrations in the frame-to-frame setting. 
Finally, the \texttt{Dirt Track} poses a unique challenge due to the high vibration of the chassis and significant dust clouds that emerge behind the car. Despite these challenges, \ourmethod-2 demonstrates improved robustness in these scenes compared to baselines. Compared with surface-informed techniques (G-CVO, GICP) general perform better than methods using point-wise residuals like ICP and classical CVO perform, even when CVO is implemented with a second-order solver.

\begin{table*}[t]
\centering
\caption{Tracking errors on the five ETH3D RGB-D sequences: RPE RMSE ($\Delta{=}1$) and APE RMSE for each method.}
\vspace*{-10pt}

\resizebox{\textwidth}{!}{%
\begin{NiceTabular}{c|cc|cc|cc|cc|cc|cc}[colortbl-like,
code-before =%
{
    % --- row 3 (sfm_bench): best/second-best per metric ---
    \cellcoloring{\bestcolor}{3}{8}\cellcoloring{\secondbestcolor}{3}{2} % RPE
    \cellcoloring{\bestcolor}{3}{3}\cellcoloring{\secondbestcolor}{3}{9} % APE
    % --- row 4 (plant_1) ---
    \cellcoloring{\bestcolor}{4}{2}\cellcoloring{\secondbestcolor}{4}{4} % RPE
    \cellcoloring{\bestcolor}{4}{3}\cellcoloring{\secondbestcolor}{4}{5} % APE
    % --- row 5 (table_3) ---
    \cellcoloring{\bestcolor}{5}{2}\cellcoloring{\secondbestcolor}{5}{4} % RPE
    \cellcoloring{\bestcolor}{5}{3}\cellcoloring{\secondbestcolor}{5}{5} % APE
    % --- row 6 (sfm_lab_room_1) ---
    \cellcoloring{\bestcolor}{6}{8}\cellcoloring{\secondbestcolor}{6}{2} % RPE
    \cellcoloring{\bestcolor}{6}{9}\cellcoloring{\secondbestcolor}{6}{7}  % APE
    % --- row 7 (planar_2) ---
    \cellcoloring{\bestcolor}{7}{2}\cellcoloring{\secondbestcolor}{7}{4} % RPE
    \cellcoloring{\bestcolor}{7}{5}\cellcoloring{\secondbestcolor}{7}{3}  % APE
    % --- row 8 (Mean) ---
    \cellcoloring{\bestcolor}{8}{8}\cellcoloring{\secondbestcolor}{8}{2} % RPE
    \cellcoloring{\bestcolor}{8}{3}\cellcoloring{\secondbestcolor}{8}{9} % APE
    % --- row 9 (Variance) ---
    \cellcoloring{\bestcolor}{9}{8}\cellcoloring{\secondbestcolor}{9}{2} % RPE
    \cellcoloring{\bestcolor}{9}{3}\cellcoloring{\secondbestcolor}{9}{9} % APE
}
]
\toprule
& \multicolumn{2}{c}{\ourmethod-2 (\textbf{Ours})}
& \multicolumn{2}{c}{\ourmethod-1 (\textbf{Ours})} 
& \multicolumn{2}{c}{Fast-VGICP~\cite{koide2021voxelized}}  
& \multicolumn{2}{c}{GICP~\cite{segal2009gicp}} 
& \multicolumn{2}{c}{ICP~\cite{besl1992method}} 
& \multicolumn{2}{c}{GeoTransformer~\cite{qin2023geotransformer}}\\
Sequence 
& RPE RMSE  & APE RMSE 
& RPE RMSE & APE RMSE 
& RPE RMSE & APE RMSE 
& RPE RMSE & APE RMSE 
& RPE RMSE & APE RMSE & RPE RMSE & APE RMSE\\
\midrule
\texttt{sfm\_bench}        & 0.0157 & 0.3426 & 0.1730 & 7.1005 & 0.0178 & 1.0861 & 0.0058 & 0.3568 & 0.0259 & 1.2655 &  0.0217 & 1.2327\\
\texttt{plant\_1}          & 0.0021 & 0.0095 & 0.0022 & 0.0162 & 0.0217 & 0.1842 & 0.0034 & 0.0252 & 0.0086 & 0.1279 & 0.3572 & 2.2901 \\
\texttt{table\_3}         & 0.0046 & 0.1722 & 0.0054 & 0.2293 & 0.0429 & 0.5669 & 0.0145 & 0.6896 & 0.0269 & 1.0142  & - & - \\
\texttt{sfm\_lab\_room\_1} & 0.0187 & 0.3712  & 0.0332 & 0.3315 & 0.0189 & 0.2522 & 0.0041 & 0.0304 & 0.0232 & 0.3753 & 0.0210 & 0.3418\\
\texttt{planar\_2} &	0.0021 &	0.0610 & 0.0021 &	0.0540 &	0.0177 &	0.3883 &	0.0044 &	0.1429 &	0.0103 &	0.4007  & 0.0099 & 0.3638  \\
\midrule
Mean                       &	0.0086	&0.1913 & 0.0432 &	1.5463&	0.0238&	0.4956&	0.0064	&0.2490	&0.0190&	0.6367 & - & - \\
Std.                   &	0.0079&	0.1625 & 0.0737&	3.1075&	0.0108&	0.3611&	0.0046&	0.2806&	0.0088&	0.4798 & - & - \\
\bottomrule
\end{NiceTabular}}
\label{tab:eth3d_results}
\end{table*}

\begin{figure}[t!]
    \centering
    \includegraphics[width=0.8\columnwidth]{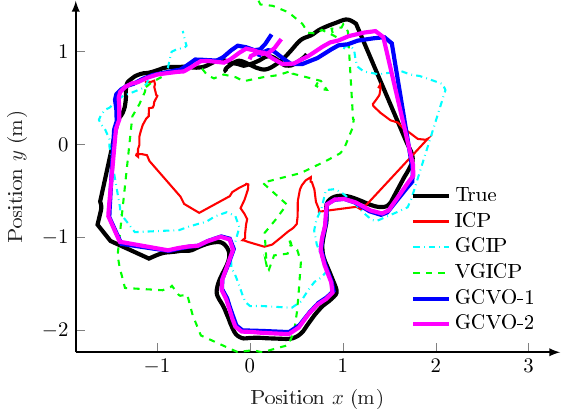}
    \vspace{-10pt}
    
    \caption{Ground truth (black) and frame-to-frame tracking using baselines and \ourmethod-1 and \ourmethod-2 on ETH3D (\texttt{table\_3}).}
    \label{fig:eth3d}
    \vspace{-10pt}
\end{figure}

\subsection{RGB-D Tracking in Indoor Scenes}
\label{sec:eth3d}
Next, we perform frame-to-frame registration on the indoor  dataset with RGB-D point clouds, ETH3D~\cite{schops2019badslam}, on the following five sequences: \texttt{sfm\_bench},  \texttt{plant\_1},  \texttt{table\_3},  \texttt{sfm\_lab\_room\_1}, \texttt{planar\_2}. We downsample each frame to around 3000 points. We assess tracking accuracy using the RMSE of Relative Pose Error (RPE) and Absolute Pose Error (APE) over the estimated trajectories with \texttt{evo}~\cite{grupp2017evo}. RPE (with 
$\Delta=1$) quantifies local drift in relative motion (position and orientation), while APE measures the global drift in absolute position against ground truth.

Quantitative results are shown in Table~\ref{tab:eth3d_results}, with qualitative results on \texttt{table\_3} in Fig.~\ref{fig:eth3d}. G-CVO has the best tracking accuracy in three sequences, and has the second-best performance for the other two. On average, G-CVO outperforms all baselines in terms of APE, and is the second-best method in terms of RPE.  % behind GICP. 

%\subsection{Large Registration with Learned  Descriptors}
%In loop closure and relocalization tasks, registration often lacks a reliable initialization and exhibits limited inter-scan overlap. We integrate \ourmethod with learned descriptors and evaluate large-motion registration on the 3DMatch~\cite{zeng20173dmatch} dataset. Specifically, we extract point-wise 32-bit  features from PREDATOR~\cite{huang2021predator} using its pretrained weights  and include them as feature label functions $\ell^{\mathbf{X}}$ in~\eqref{eq:point_cloud_function}~\cite{zhang2021new}. Instead of repeatedly refining pairwise correspondences, we select an initial set of inliers from PREDATOR and perform correspondence-free registration within these inliers. The baselines include RANSAC-1M~\cite{fischler1981ransac}, FGR~\cite{zhou2016fast}, TEASER++~\cite{yang2020teaser}, and SC2-PCR~\cite{Chen_2022_sc2pcr}. The metrics include the Registration Recall (RR), Rotation Error (RE) in \texttt{deg}, and Translation Error (TE) in $\texttt{cm}$. The True-Positive instances of RR include the registration results whose rotation errors are less than $15^{\circ}$ and translation errors are less than $30\mathrm{cm}$.  The RR and TE are calculated over successful registrations. 

\subsection{Computational Properties and Approximations}\label{sec:exp:ablations}

To provide intuition, we study the convergence of \ourmethod on the KITTI \lidar  dataset.  
 We compare the number of iterations and total time to converge in Fig.~\ref{fig:ablation_convergence} and Fig.~\ref{fig:ablation_time_no_init}. We compare \ourmethod-1, \ourmethod-2, and a version of \ourmethod that implements the exact Hessian, referred to as \ourmethod-E.
 
 First, we consider the frame pair of indices \texttt{000000} and \texttt{000001} from KITTI sequence \texttt{03}, which has a ground truth relative pose $\T_{\mathrm{gt}}$ with a translation of approximately $0.95$ (m). We report the change of the Frobenius error $\|\T^{-1}_{\mathrm{gt}}\T^{(k)}-\idd\|_F$ with respect to the iterates $k$ in Fig.~\ref{fig:ablation_convergence}. \ourmethod-2 converges in only $\sim$30 iterations, and \ourmethod-1 requires more than 1200 iterations to reach the same tolerances. 
 The increase of the inner product objective $f(\T)$ correlates well with the decrease in error.
\begin{figure}[t!]
\centering
\includegraphics[width=0.8\columnwidth]{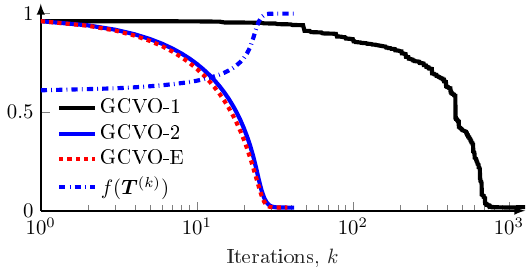}
\vspace{-10pt}

\caption{Normalized Frobenius error with the 1st/2nd-order optimizers in GCVO (black/\textcolor{black}{blue}/red) over the iterates on an input pair of KITTI sequence \texttt{03}. For reference, we report the inner-product objective of in GCVO-2 (\textcolor{black}{blue, dashed}).}
\label{fig:ablation_convergence}
\end{figure}

%This results in a significant difference in computation times 
In Fig.~\ref{fig:ablation_time_no_init}, we compare the convergence time, excluding IO preprocessing, averaged over a sequence of 100 frames from the KITTI sequence \texttt{04}, as a function of the number of points in the registration ($N_X$ and $N_Z$). The methods are evaluated both without an initial guess ($\T^{(0)}=\idd$) and with an initial guess (using a constant velocity assumption to initialize $\T^{(0)}$), as is typically done in visual and \lidar  odometry~\cite{xu22fastlio2}. %We refer to registrations initialized with the identity transformation as those without an initial guess, and those initialized using a constant-velocity assumption as with an initial guess.
GCVO-2 converges in less than $100$ms with around $10^4$ points %, which is on the order of the size of $|X|+|Z|$ 
in the frame-to-frame tracking of Sec.~\ref{sec:exp:featuresparse}. GCVO-1 is one order of magnitude slower in this setting. 
While \ourmethod-E and \ourmethod-2 converge to the same solution,   the approximation in \ourmethod-2 decreases compute time by $\sim$50\% compared to \ourmethod-E % across the problem instances, 
 irrespective of the quality of the initial guess. 

%\ray{add one more time comparison with init guess}

\begin{figure}
\centering
\includegraphics[trim=0cm 0cm 0cm 0cm, clip=true, width=0.75\columnwidth]{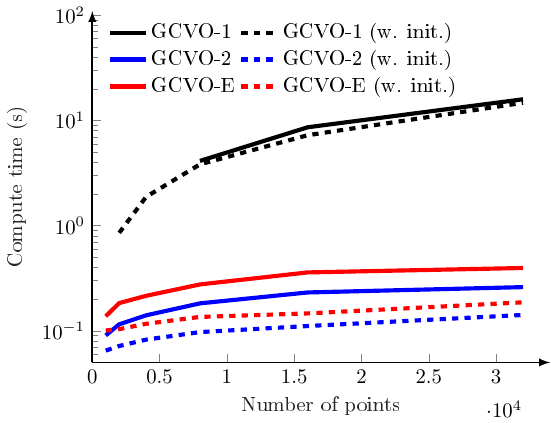}

  \caption{Average convergence time (excluding 
IO preprocessing) for \ourmethod-1/2/E on KITTI sequence \texttt{04}, with and without warm-starting the optimizer using a constant velocity assumption.}
\vspace{-20pt}
  \label{fig:ablation_time_no_init}
\end{figure}
The per-iteration complexity of G-CVO is $O(N_X N_Z)$, compared to $O(N_X \log N_Z)$ for KD-tree–based ICP. VGICP~\cite{koide2021voxelized} achieves additional speedups via voxel hashing, which is not used in G-CVO. To evaluate runtime scaling, we report time–accuracy trade-offs on seven KITTI sequences (\texttt{01-07}) with $N \in {4,8,16}$k voxel-downsampled points (Fig.~\ref{fig:speedup}). G-CVO and VGICP-GPU run on GPU, while others use CPU. Despite higher per-iteration cost, G-CVO attains higher accuracy than VGICP (Fig.~\ref{fig:speedup}) and converges faster than CVO’s first-order solver (Fig.~\ref{fig:ablation_convergence}) due to second-order optimization. Voxel downsampling takes $0.0413$ s per frame; combined with convergence time, G-CVO achieves 10 Hz tracking with 4k points.

%\vspace*{-13pt}

\begin{figure}[h!]
    \centering
   \includegraphics[width=0.5\columnwidth]{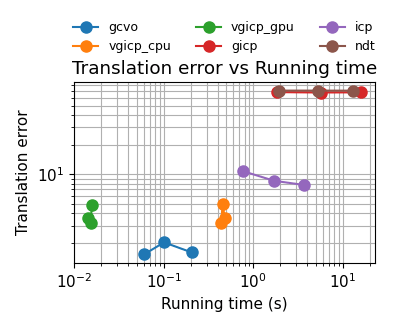}%
   \includegraphics[width=0.5\columnwidth]{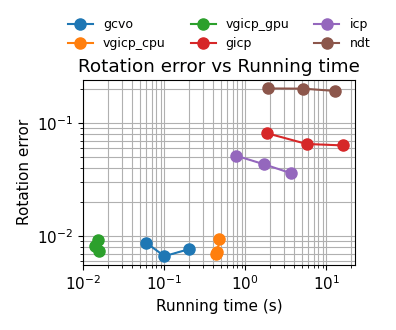}
   
  \vspace*{-15pt}
   
  \caption{\textit{Convergence time (IO/downsampling excluded) vs accuracy. Accuracy in KITTI metrics at $N=\{4,8,16\}\cdot 10^3$ points.}}
  
    \label{fig:speedup}
\end{figure}

%\vspace*{-12pt}

%%%%%%%%%%%%%%%%%%%%%%%%%%%%%%%%%%%%

\subsection{Object Registration}\label{sec:object}
%\noindent\rquestionbC{Q3}{
%The scope is limited to tracking.
%}
%7. 7qUH: Scope: limited to tracking. Other scenarios, such as object-centric registration or global registration with low-overlap
While the primary use of \ourmethod is in fine-grained local registration for tracking, we also evaluate its object-level registration capability on ModelNet40~\cite{wu20153d} (Tab.~\ref{tab:global}). Each model is randomly transformed with mean rotation and translation of $45^{\circ}$ and $0.5\mathrm{m}$, with maximum magnitudes of $80^{\circ}$ and $0.85\mathrm{m}$. We further apply zero-mean Gaussian noise ($\sigma^2=0.01$), uniformly-distributed outliers ($\gamma=0.2$), and randomly crop $30\%$ of the points using a plane in both point clouds. We compare against VGICP, RANSAC (50k iterations), and GeoTransformer~\cite{qin2023geotransformer}, a learning-based global registration method. G-CVO consistently outperforms baselines that are not learning-based. Without cropping or outliers, it surpasses GeoTransformer; under reduced overlap, performance degrades due to the absence of learned invariant features. Refining GeoTransformer outputs with G-CVO consistently improves pose accuracy.

\begin{table}[t!]
    \vspace{-10pt}
    \centering
    \setlength{\tabcolsep}{2pt}
    \caption{\textit{Mean object registration errors on  ModelNet40~\cite{wu20153d}}}
    \vspace{-10pt}
    
    \resizebox{\columnwidth}{!}{
    \begin{NiceTabular}{c|c|c|c|c|c|c}[colortbl-like,
code-before =%
{
    \cellcoloring{\bestcolor}{6}{2}
    \cellcoloring{\bestcolor}{6}{3}
    \cellcoloring{\bestcolor}{5}{4}
    \cellcoloring{\bestcolor}{5}{5}
    \cellcoloring{\bestcolor}{5}{6}
    \cellcoloring{\bestcolor}{5}{7}
    \cellcoloring{\secondbestcolor}{5}{2}
    \cellcoloring{\secondbestcolor}{5}{3}
    \cellcoloring{\secondbestcolor}{6}{4}
    \cellcoloring{\secondbestcolor}{6}{5}
    \cellcoloring{\secondbestcolor}{6}{6}
    \cellcoloring{\secondbestcolor}{6}{7}
    \cellcoloring{\secondbestcolor}{7}{4}
    \cellcoloring{\secondbestcolor}{7}{5}
    \cellcoloring{\secondbestcolor}{7}{6}
    \cellcoloring{\secondbestcolor}{7}{7}
    \cellcoloring{\bestcolor}{8}{4}
    \cellcoloring{\bestcolor}{8}{5}
    \cellcoloring{\bestcolor}{8}{6}
    \cellcoloring{\bestcolor}{8}{7}
}
]\toprule
    Method & \multicolumn{2}{c}{$\sigma^2=0.01$} & \multicolumn{2}{c}{$\sigma^2=0.01$, $\gamma=0.2$} &  \multicolumn{2}{c}{$\sigma^2=0.01$,  crop=$30\%$}\\
     (${\star}$: init with~\cite{qin2023geotransformer}) & rot. ($\mathrm{deg}$) & trans. ($\mathrm{m}$) & rot.($\mathrm{deg}$) & trans.($\mathrm{m}$)& rot.($\mathrm{deg}$) & trans.($\mathrm{m}$)\\\midrule
RANSAC-50k & 105.9665 & 0.5936 & 104.2570 & 0.5871 &111.6881 & 0.6616 \\
VGICP & 38.6575 & 0.4484 & 43.6178 &  0.4832 &41.6356  & 0.4643\\
GeoTransformer~\cite{qin2023geotransformer} & {0.6495} &  {0.0088} &   {0.9341} &	{0.0110}& {4.2401} & {0.0460} \\

GCVO-2{\color{white}$^{\star}$} (\textbf{Ours}) &    {0.2669} & {0.0024}  &  {3.0607} &  {0.0206}  &  {32.7753}  &  {0.2829} \\
\midrule
VGICP$^{\star}$ & - &  - & 1.0363 & 0.0103 &6.0881 & 0.1229\\
GCVO-2$^{\star}$ (\textbf{Ours})&   - & - & {0.3549} &	{0.0014}	   & {3.8465} & {0.0401}\\\bottomrule
    \end{NiceTabular}}
    
    \vspace{-12pt}
    
    \label{tab:global}
\end{table}

\section{Conclusions}

%{\color{red}Limitations. GPU is almost necessary, a limitation of the analysis (targeting odometry).}

%We propose a novel correspondence-free formulation of the \lidar registration problem. It effectively estimates anisotropic kernels to construct point cloud functions, thus providing precise alignment along the surface normals of \lidar scans. We demonstrate better accuracy and the speed of the proposed method versus the baselines, such as ICP-based methods and classical CVO, on public and self-collected \lidar datasets. 

We propose a correspondence-free, local-geometry-aware registration method to align point clouds by maximizing the inner product between their functional representations in an RKHS. In tracking, \ourmethod achieves lower drift than CVO and ICP-based baselines on public and self-collected \lidar/RGB-D datasets. It can also be used for local refinement in global registration tasks. %It achieves lower drifts than the classical CVO and ICP-based baselines on publicm self-collected \lidar and RGB-D odometry datasets. % It also shows the capability of aligning large  
The use of anisotropic kernels encourages surface-aware alignment, while the second-order Riemannian optimization enables fast convergence.

A limitation of the approach is relying on a GPU for fast evaluations of dense pairwise kernel functions and Jacobians. Additionally, the proposed solvers target odometry applications that assume sufficient overlap between the two inputs, which may not be respected in large motion or low-overlap registrations. %, which often requires deep learning-based frameworks.  
Future works include integrating the registration \ourmethod in complete visual and \lidar odometry systems as well as extending the objective function into differentiable layers of deep learning frameworks.

{
    \small
    \bibliographystyle{ieeenat_fullname}
    \bibliography{main}
}

%\iffalse
\clearpage

\newpage\appendix
\label{app:dataset}

%\section{Coefficients of the Line Search Expansion}

\section{Proof of Lemma~\ref{lem:gradient}}\label{app:gradient}
Let $\boldsymbol{\gamma}(t) = \T\exp(t\xivec^{\wedge}) $. We write the objective function $f$ in~\eqref{eq:rkhs_loss} as a function of this perturbation in $t$ as
\begin{align*}
f(\boldsymbol{\gamma}(t))=\sum_{ij} c_{ij}   k(\xvec_i,\exp(-t\xivec^{\wedge}) \T^{-1} \zvec_j),
\end{align*}
where the sum is over $i\in 1,...,|X|$ and $j=1,...,|Z|$, respectively. Denote $\rvec_{ij}:=\xvec_i - \T^{-1}\zvec_j$ and $f_{ij} := c_{ij}k(\xvec_i, \T^{-1}\zvec_j)$ as in~\eqref{eq:f_ij_J_j}. Taking the directional derivative, we obtain
\begin{align}\label{eq:lasteq}
    \mathrm{D}&f(\boldsymbol{\gamma}(t) )[\xivec^\wedge]  = \sum_{ij} c_{ij}  \mathrm{D}k(\xvec_i,\exp(-t\xivec^{\wedge}) \T^{-1} \zvec_j)[\xivec^\wedge] \notag\\
    =& -\dfrac{1}{2}\sum_{ij} c_{ij}  k(\xvec_i,  \T^{-1} \zvec_j)\lim_{t\rightarrow 0} \dfrac{1}{t}\{ \notag\\
    &\langle \xvec_i-  (\idd-t\xivec^\wedge+o(t^2))\T^{-1} \zvec_j,  \notag\\&
    {\boldsymbol{\Sigma }_{ij}^{-1}} (\xvec_i-  (\idd-t\xivec^\wedge+o(t^2)) \T^{-1} \zvec_j )\rangle \notag\\
    &-\langle \xvec_i-  \T^{-1}  \zvec_j, {\boldsymbol{\Sigma}_{ij}^{-1}} ( \xvec_i-  \T^{-1}  \zvec_j) \rangle\}  \notag\\
   % &=  \sum_{ij} c_{ij}  k(\xvec_i,  \T^{-1} \zvec_j)  \langle \xvec_i -  \T ^{-1} \zvec_j, \boldsymbol{\Sigma}(\xvec_i)^{-1} \xivec^{\wedge} \T^{-1} \zvec_j \rangle, \\
    %=&+\sum_{ij}c_{ij}k(\xvec_i,  \T^{-1} \zvec_j)\langle \xvec_i-  \T^{-1}  \zvec_j,  \boldsymbol{\Sigma}_{ij}^{-1} \xivec^{\wedge} \T^{-1} \zvec_j\rangle \notag \\
    =&-\sum_{ij}c_{ij}k(\xvec_i,  \T^{-1} \zvec_j)\langle \xvec_i-  \T^{-1}  \zvec_j,  \boldsymbol{\Sigma}_{ij}^{-1} \xivec^{\wedge}(\T^{-1} \zvec_j) \rangle  \notag \\
    =&\sum_{ij} f_{ij}\rvec_{ij}^{\top}\boldsymbol{\Sigma}_{ij}^{-1} (\T^{-1}\zvec_j)^{\wedge}\omegavec - \sum_{ij}f_{ij}\rvec_{ij}^{\top}\boldsymbol{\Sigma}_{ij}^{-1}\vvec.
 \end{align}
Using~\eqref{eq:equivalence}:
\begin{align}
\D f(\T)[\xivec^\wedge]
=
\langle \grad f(\T), \xivec^\wedge \rangle
=
g_{\omegavec}^\top \omegavec
+
g_{\vvec}^\top {\vvec}.
\end{align}
and identifying the gradients for translation and rotational components from~\eqref{eq:lasteq} yields the stated result.

\section{Line search}\label{app:linesearch}
\begin{subequations}
For the Taylor expansion in the line search, we obtain expressions similar to~\protect{\cite[Sec. 7]{clarkmaani20}} except that we use anisotropic kernels,  with coefficients
\begin{align}
g^1_{ij} &= \beta_{ij}, \\
g^2_{ij} &= \gamma_{ij}+\frac{1}{2}\beta_{ij}^2, \\
g^3_{ij} &= \delta_{ij}+\beta_{ij}\gamma_{ij}+\frac{1}{6}\beta_{ij}^3, \\
g^4_{ij} &= \varepsilon_{ij} + \beta_{ij}\delta_{ij}+\frac{1}{2}\beta_{ij}^2\gamma_{ij} + \frac{1}{2}\gamma_{ij}^2 + \frac{1}{24}\beta_{ij}^4.
\end{align}
However, our expressions differ in the following parameters
\begin{align}
%\alpha_{ij} &= \frac{-1}{2}\lVert \xvec_i-\zvec_j\rVert_n \\
\beta_{ij} &= - \langle \omegavec^{\land}\T^{-1}\zvec_j+\vvec,\boldsymbol{\Sigma}_{ij}^{-1}(\xvec_i-\T^{-1} \zvec_j)\rangle,\\
%%%%%%%
\gamma_{ij} &= (-\dfrac{1}{2}) \langle \omegavec^{\land}\T^{-1}\zvec_j+\vvec, \boldsymbol{\Sigma}_{ij}^{-1} (\omegavec^{\land}\T^{-1}\zvec_j+\vvec)\rangle - \notag \\
&\hspace{10pt} \langle (\omegavec^\land)^2\T^{-1}\zvec_j+\omegavec^\land\vvec, \boldsymbol{\Sigma}_{ij}^{-1} (\xvec_i-\T^{-1} \zvec_j)\rangle, \\
%%%%%%%
\delta_{ij} &=  \Big( \langle -\omegavec^{\land}\T^{-1}\zvec_j-\vvec,  \boldsymbol{\Sigma}_{ij}^{-1}(\omegavec^{\land})^2\T^{-1}\zvec_j +\omegavec^{\land}\vvec)\rangle+\notag\\ 
&\hspace{10pt}\langle -(\omegavec^{\land})^3\T^{-1}\zvec_j-(\omegavec^{\land})^2\vvec, \boldsymbol{\Sigma}_{ij}^{-1}(\xvec_i-\T^{-1}\zvec_j)\rangle\Big),\hspace*{-5pt}\\
%%%%%%%
\varepsilon_{ij} &= -\dfrac{1}{2}\Big(\langle(\omegavec^{\land})^2\T^{-1}\zvec_j+\omegavec^{\land}\vvec, \boldsymbol{\Sigma}_{ij}^{-1}( (\omegavec^{\land})^2\T^{-1}\zvec_j+\omegavec^{\land}\vvec )\rangle+\notag\\ &
\hspace{10pt}2\langle \omegavec^{\land}\T^{-1}\zvec_j+\vvec, \boldsymbol{\Sigma}_{ij}^{-1} ((\omegavec^{\land})^3\T^{-1}\zvec_j +(\omegavec^{\land})^2\vvec)\rangle + \notag\\ &
\hspace{10pt}2\langle (\omegavec^{\land})^4\T^{-1}\zvec_j+(\omegavec^{\land})^3\vvec, \boldsymbol{\Sigma}_{ij}^{-1}  (\xvec_i-\T^{-1} \zvec_j)\rangle\Big).
\end{align}
\end{subequations}

\section{Loss Slices}\label{sec:exp:lossslices}
First, we give intuition on the behavior of the RKHS loss under different perturbations of the ground truth pose. To this end, we sample $8$ point clouds from our self-collected \lidar datasets, later described in Sec.~\ref{sec:exp:featuresparse}. Fig.~\ref{fig:ablation_loss} shows the value of the objective function when translating the target point cloud with a pose $\T=\exp(\xivec^{\land})$, where $\xivec = (0, \phi, 0, x,0, z)^{\top}$ and $x,z,\phi$ are varied independently over $[-3,3]$ for translations and $[-0.5, 0.5]$ for rotations. Near the ground truth pose of each pair of point clouds, there is a single maximum of the objective function~\eqref{eq:rkhs_loss}. Besides, the curvature of the loss is greater on the $z$ axis with the %. This is due to the 
kernel choice in~\eqref{eq:kernel}, resulting in tighter fits along surfaces. %The objective is not locally quadratic or even smooth in the rotation perturbation, motivating the line search in \ourmethod-1.

\section{Derivation of the explicit form of correction term $\Gamma(\cdot, \cdot)$}

Let
\begin{equation}
\xivec =
\begin{bmatrix}
\omegavec\\
\vvec
\end{bmatrix},
\qquad
\gvec = \grad f(\T)^\vee =
\begin{bmatrix}
g_{\omegavec}\\
g_{\vvec}
\end{bmatrix}.
\end{equation}
From~\eqref{eq:gamma_se3}, the vee representation of the connection term is
\begin{equation}
\Gamma^\vee(\xivec^\wedge,\grad f(\T))
=
\frac{1}{2}
\left(
\ad_{\xivec}\gvec
-
\ad_{\xivec}^* \gvec
-
\ad_{\gvec}^* \xivec
\right).
\label{eq:gamma_deriv_start}
\end{equation}
Using~\eqref{eq:gamma_se3},
\begin{equation}
\ad_{\xivec}
=
\begin{bmatrix}
\omegavec^\wedge & 0\\
\vvec^\wedge & \omegavec^\wedge
\end{bmatrix},
\qquad
\ad_{\xivec}^*
=
\begin{bmatrix}
-\omegavec^\wedge & -\vvec^\wedge\\
0 & -\omegavec^\wedge
\end{bmatrix}.
\end{equation}
Therefore,
\begin{equation}
\ad_{\xivec}\gvec
=
\begin{bmatrix}
\omegavec^\wedge g_{\omegavec}\\
\vvec^\wedge g_{\omegavec} + \omegavec^\wedge g_{\vvec}
\end{bmatrix},
\end{equation}
\begin{equation}
\ad_{\xivec}^* \gvec
=
\begin{bmatrix}
-\omegavec^\wedge g_{\omegavec} - \vvec^\wedge g_{\vvec}\\
-\omegavec^\wedge g_{\vvec}
\end{bmatrix},
\qquad
\ad_{\gvec}^* \xivec
=
\begin{bmatrix}
-g_{\omegavec}^\wedge \omegavec - g_{\vvec}^\wedge \vvec\\
-g_{\omegavec}^\wedge \vvec
\end{bmatrix}.
\end{equation}
Substituting into~\eqref{eq:gamma_deriv_start}, we obtain
\begin{align}
\Gamma^\vee(\xivec^\wedge,\grad f(\T))
&=
\frac{1}{2}
\begin{bmatrix}
\omegavec^\wedge g_{\omegavec}
+\omegavec^\wedge g_{\omegavec}
+\vvec^\wedge g_{\vvec}
+g_{\omegavec}^\wedge \omegavec
+g_{\vvec}^\wedge \vvec\\[4pt]
\vvec^\wedge g_{\omegavec}
+2\omegavec^\wedge g_{\vvec}
+g_{\omegavec}^\wedge \vvec
\end{bmatrix}.
\end{align}
Using \(a^\wedge b = -\,b^\wedge a\), we have
\begin{equation}
\vvec^\wedge g_{\vvec} + g_{\vvec}^\wedge \vvec = 0,
\,\,
\omegavec^\wedge g_{\omegavec} + g_{\omegavec}^\wedge \omegavec = 0,
\,\,
\vvec^\wedge g_{\omegavec} + g_{\omegavec}^\wedge \vvec = 0.
\end{equation}
Hence,
\begin{equation}
\Gamma^\vee(\xivec^\wedge,\grad f(\T))
=
\begin{bmatrix}
-\frac{1}{2} g_{\omegavec}^\wedge \omegavec\\[4pt]
- g_{\vvec}^\wedge \omegavec
\end{bmatrix}
=
\begin{bmatrix}
-\frac{1}{2} g_{\omegavec}^\wedge & 0\\
- g_{\vvec}^\wedge & 0
\end{bmatrix}
\begin{bmatrix}
\omegavec\\
\vvec
\end{bmatrix}.
\end{equation}
Therefore,
\begin{equation}
\Gamma(\xivec^\wedge,\grad f(\T))
=
\left(
\begin{bmatrix}
-\frac{1}{2} g_{\omegavec}^\wedge & 0\\
- g_{\vvec}^\wedge & 0
\end{bmatrix}
\xivec
\right)^\wedge.
\end{equation}

\section{Examples of the self-collected datasets}
Fig.~\ref{fig:oval} shows the scenes and the example point cloud observations in Sec.~\ref{sec:exp:featuresparse}. The first row is an example frame from the \texttt{Dirt Track} sequence, where there is dust floating behind the vehicle, illustrated in red in the \lidar visualization. The second row is an example frame from the \texttt{Race Track} sequence, where there are fewer features to track, and the car travels at greater speeds. The third row shows the \texttt{Skid Pad} dataset, a large paved surface.
\iffalse
\begin{figure}[t!]
    \centering
    % \includegraphics[height=1.3cm]{dirt_track.png}
    % \includegraphics[height=1.3cm]{dirt_track_illustraiton.png}
    % \includegraphics[height=1.3cm]{race_track_illustration.png}
    % \includegraphics[height=1.3cm]{th_2mile.png}
    \includegraphics[height=2.46cm]{dirt_track.png}
    \includegraphics[height=2.46cm]{dirt_track_illustraiton.png}
    \\
    \includegraphics[height=2.8cm]{race_track_illustration.png}
    \includegraphics[height=2.8cm]{th_2mile.png}
    
    \caption{Sample point clouds and images from the \texttt{Dirt Track} (left) and \texttt{Race Track} (right) environments.}
    \label{fig:samples}
\end{figure}
\fi

\begin{figure*}[h!]
\centering
\begin{tikzpicture}
\node[anchor=north west, inner sep=0] (car) at (0,0) 
%\iffalse
{
  \includegraphics[trim=0cm 0cm 0cm 0cm, clip=true, width=0.32\textwidth%height=0.2\textwidth
  ]{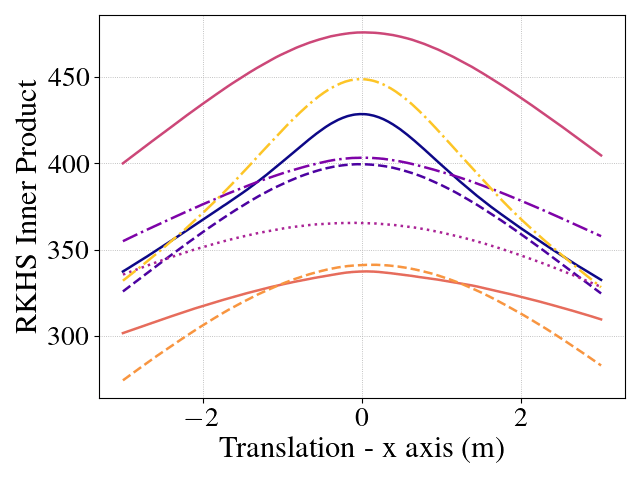}
\label{fig:ablation_convergence_no_init}
};
\node[anchor=north west, inner sep=0] (track) at (0.33\textwidth,0)
%\fi
{
  \includegraphics[trim=0cm 0cm 0cm 0cm, clip=true, width=0.32\textwidth%height=0.2\textwidth
  ]{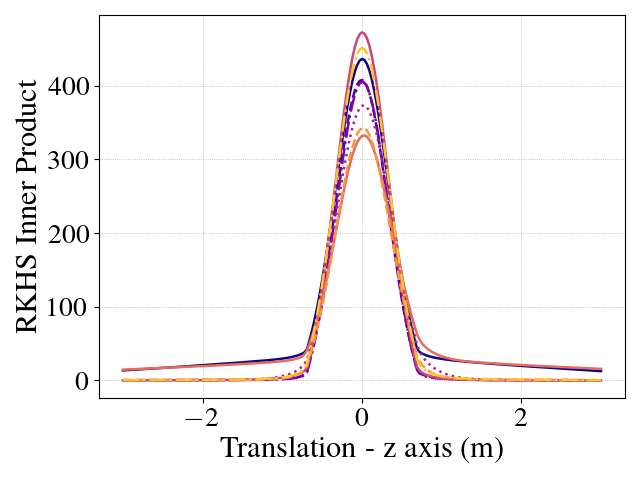}
\label{fig:ablation_convergence_init}
};
\node[anchor=north west, inner sep=0] (car) at (0.66\textwidth,0) 
{
  \includegraphics[trim=0cm 0cm 0cm 0cm, clip=true, width=0.32\textwidth]{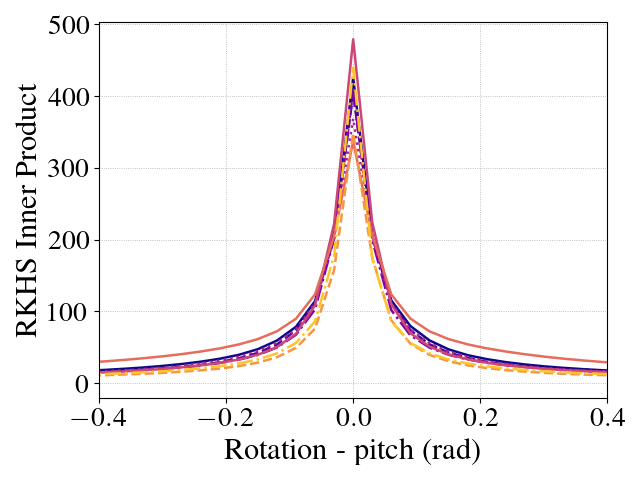}  \label{fig:ablation_convergence_no_init}
};
\end{tikzpicture}
\vspace{-10pt}

\caption{The RKHS inner product with translational and rotational perturbations. Each line corresponds to a different input pair.
}
\label{fig:ablation_loss}
\end{figure*}

\begin{figure*}[h!]
\centering

\begin{tikzpicture}
    \node[anchor=north east] at (0,0) {\includegraphics[trim={1.3cm 1.3cm 1.3cm 1.3cm}, clip, width=7cm]{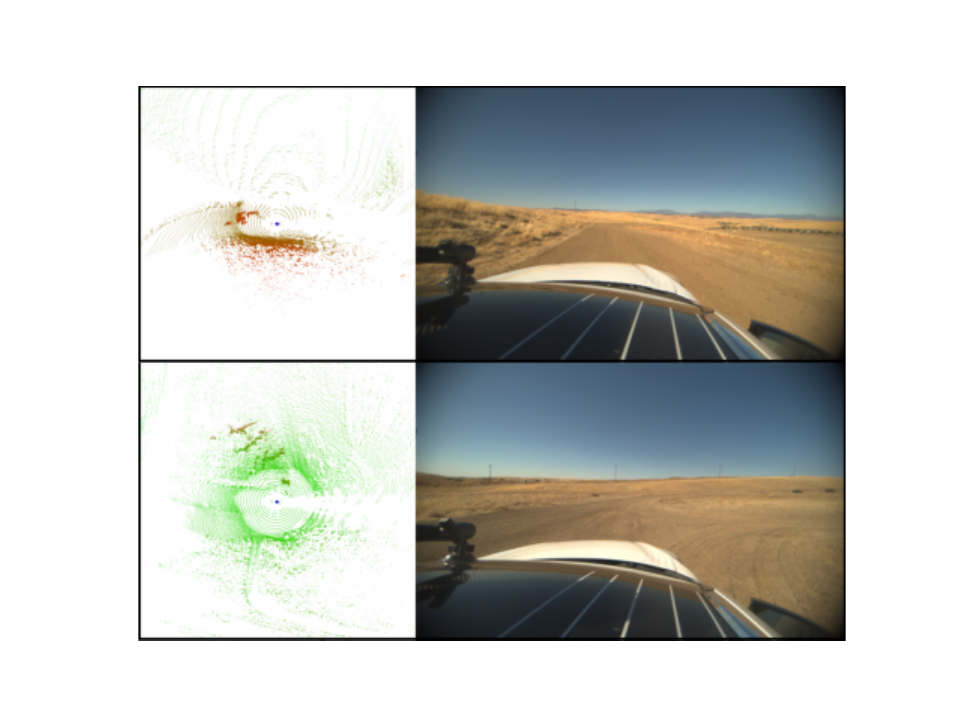}};
    \node[anchor=north west] at (0,0) {\includegraphics[trim={1.3cm 1.0cm 1.3cm 1.0cm}, clip, width=9cm]{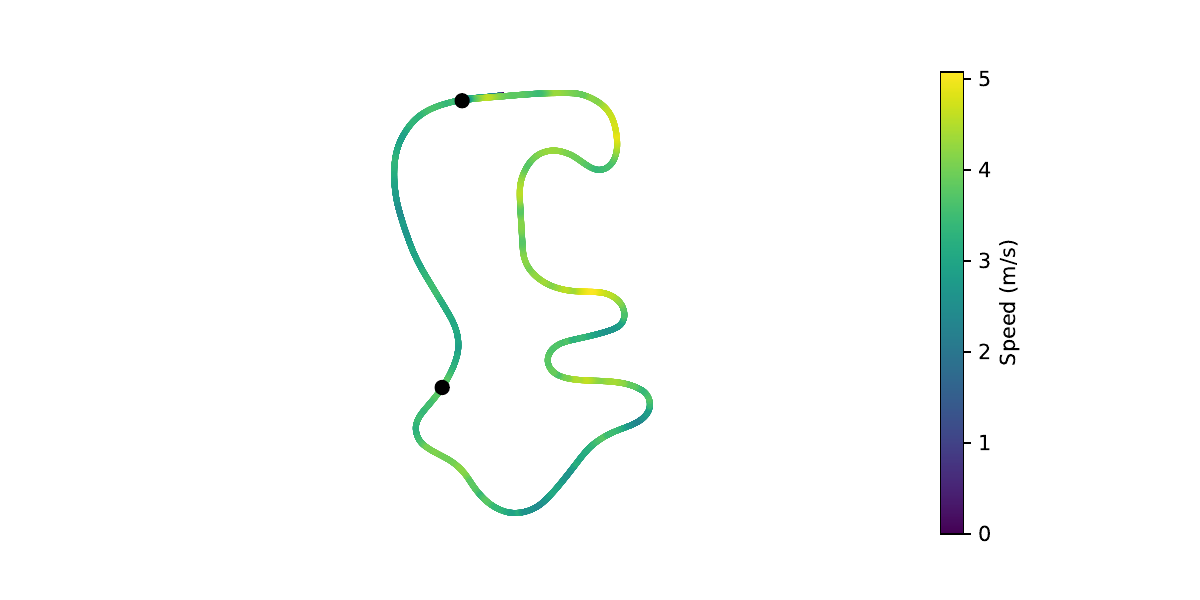}};
    \draw[black, thick, dashed] (-0.45,-1) -- (3.4,-0.5);
    \draw[black, thick, dashed] (-0.45,-4) -- (3.2,-2.95);
\end{tikzpicture}%

\begin{tikzpicture};
    \node[anchor=north east] at (0,0) {\includegraphics[trim={1.3cm 1.3cm 1.3cm 1.3cm}, clip, width=7cm]{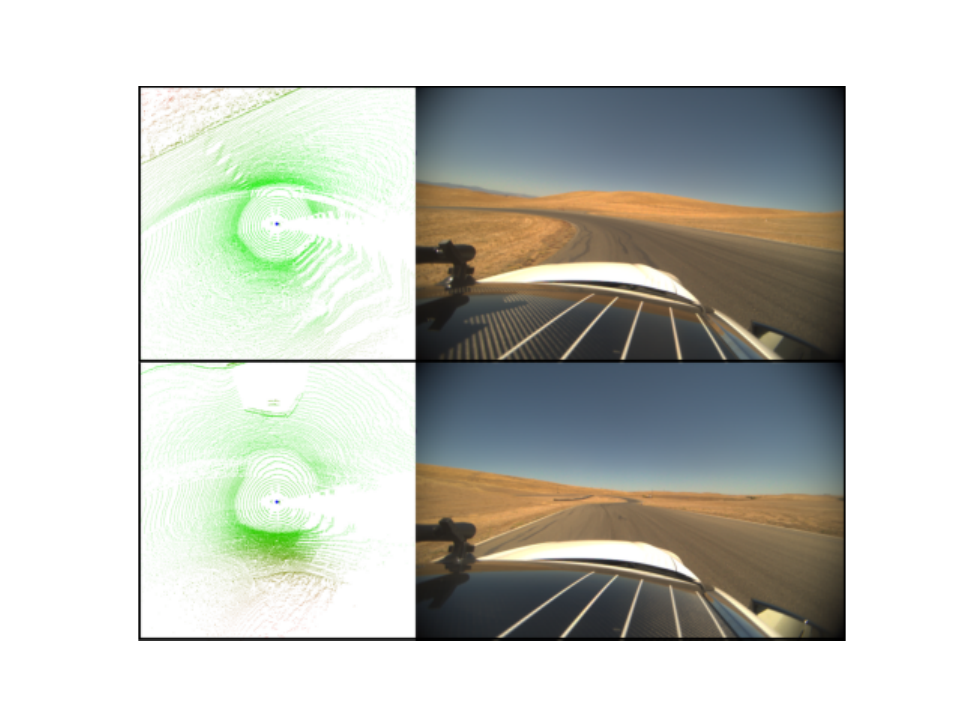}};
    \node[anchor=north west] at (0,0) {\includegraphics[trim={1.3cm 1.0cm 1.3cm 1.0cm}, clip, width=9cm]{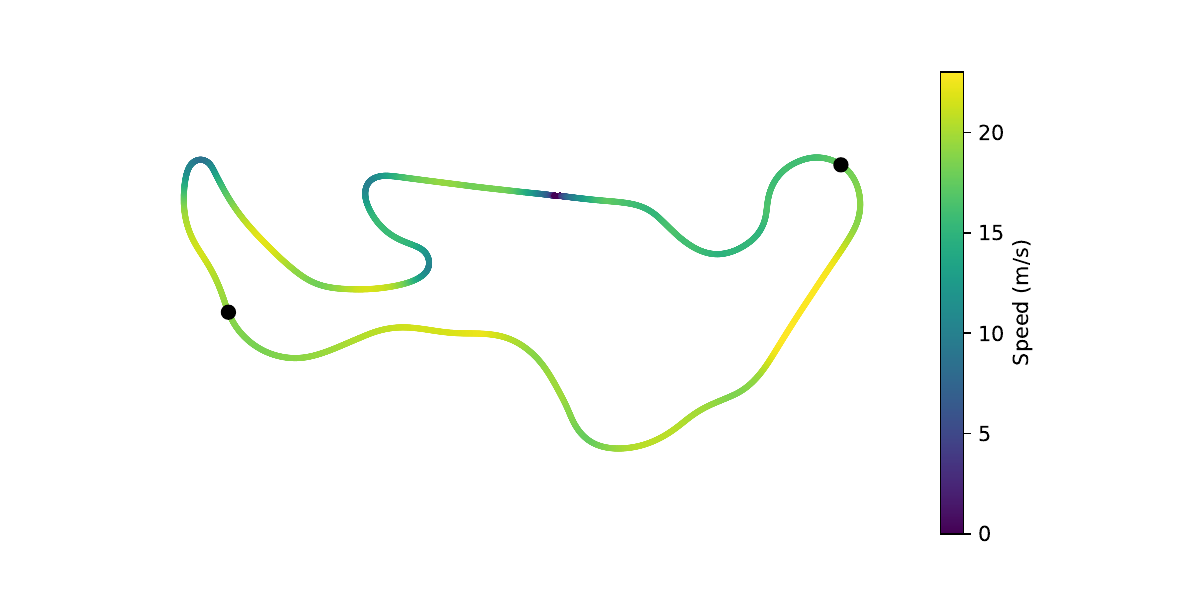}};
    \draw[black, thick, dashed] (-0.45,-1) -- (6.7,-1.02);
    \draw[black, thick, dashed] (-0.45,-4) -- (1.43,-2.3);
\end{tikzpicture}%

\begin{tikzpicture}
    \node[anchor=north east] at (0.2,0) {\includegraphics[width=6.35cm]{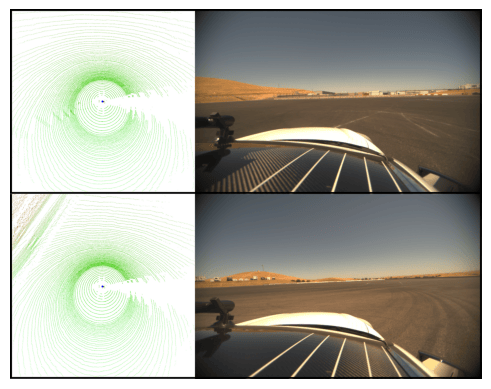}};
    \node[anchor=north west] at (0,0) {\includegraphics[width=9cm]{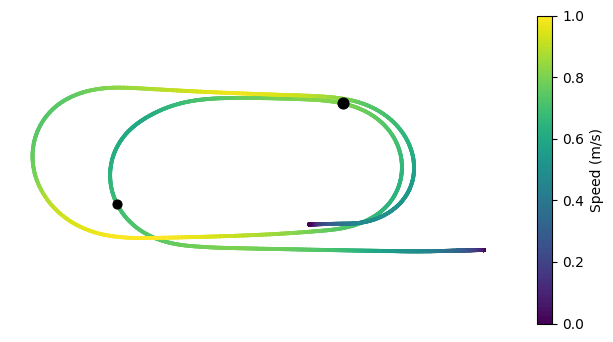}};
    \draw[black, thick, dashed] (-0.05,-1) -- (5.1,-1.6);
    \draw[black, thick, dashed] (-0.05,-4) -- (1.85,-3.1);
\end{tikzpicture}%

\vspace{-15pt}
\caption{Illustration of the \texttt{Dirt Track} dataset (top), \texttt{Race Track} dataset (middle), and \texttt{Skid Pad} dataset (bottom), with lidar point clouds and images at two points shown at two points along the track.}
\label{fig:oval}
\end{figure*}

\section{Qualitative tracking results}
We provide more qualitative results for frame-to-frame tracking on the KITTI \lidar dataset and ETH3D RGB-D dataset in Fig.~\ref{fig:kitti_stacked} and~\ref{fig:eth3d_stacked}, respectively. We stacked the point clouds based on the poses estimated from the frame-to-frame tracking results in  KITTI  \texttt{00} from Sec.~\ref{sec:exp:featurerich} and the first 400 frames of ETH3D \texttt{table\_3} from Sec.~\ref{sec:eth3d}.

% Optional: switch subfigure labels from (a),(b)... to (1),(2)...
% \renewcommand{\thesubfigure}{\arabic{subfigure}}
% \makeatletter
% \renewcommand{\p@subfigure}{\thefigure} % clean refs like Fig. 1(2)
% \makeatother

% In body
\begin{figure*}[t]
  \centering

  % Row 1 (3 across)
  \subfloat[GCVO, 2nd-order ]{%
    \includegraphics[width=0.32\textwidth]{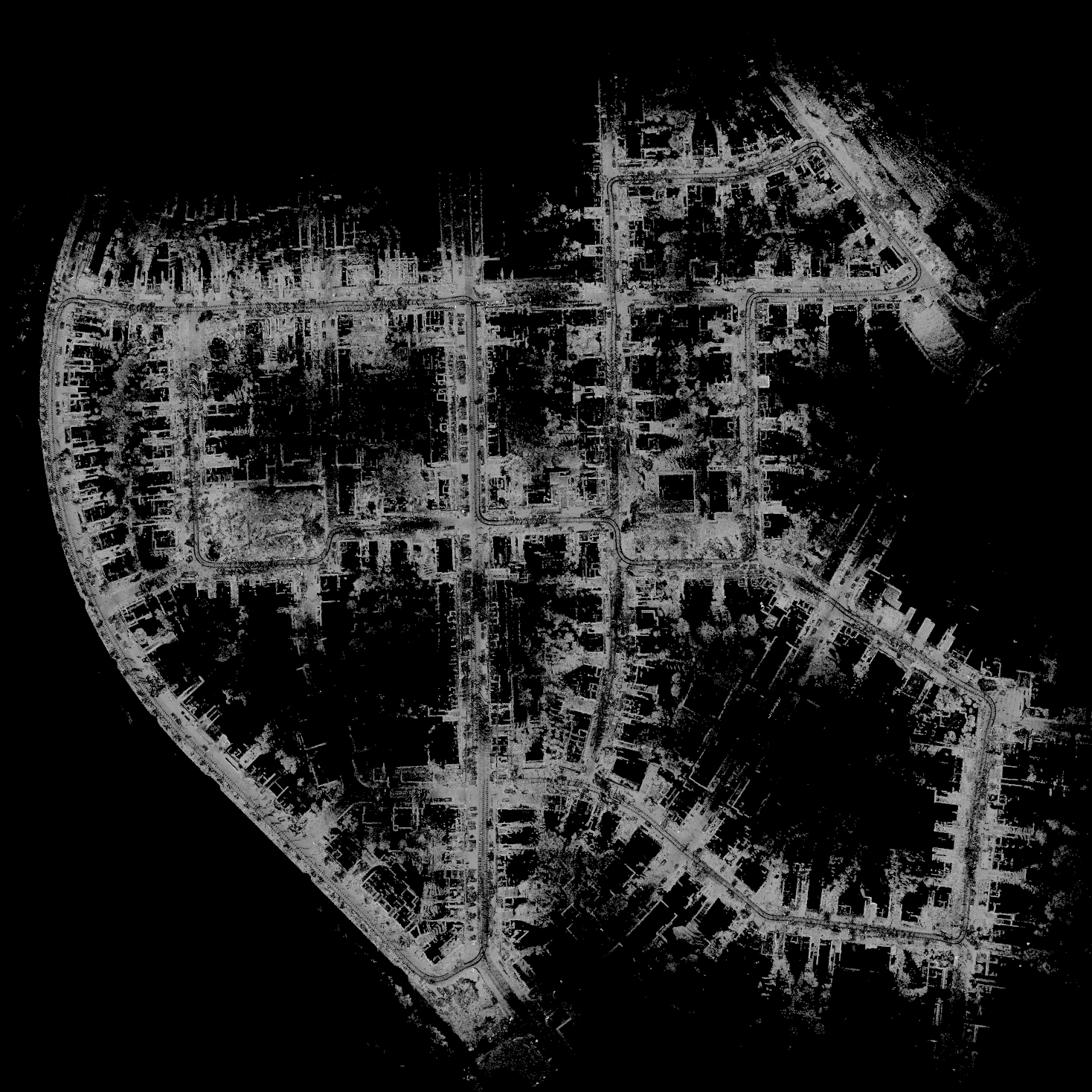}}\hfill
  \subfloat[GCVO, 1st-order ]{%
    \includegraphics[width=0.32\textwidth]{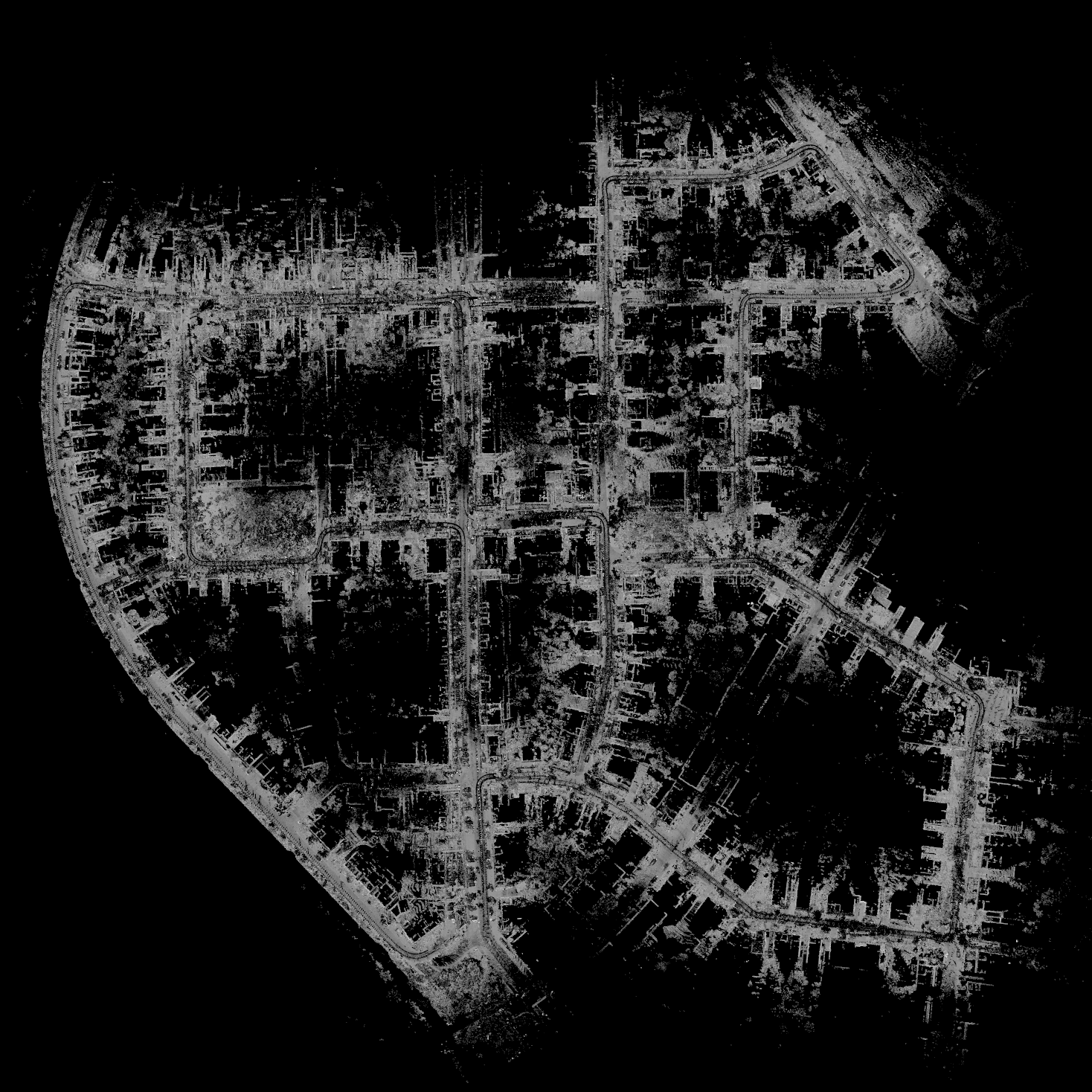}}\hfill
  \subfloat[CVO, 1st-order]{%
    \includegraphics[width=0.32\textwidth]{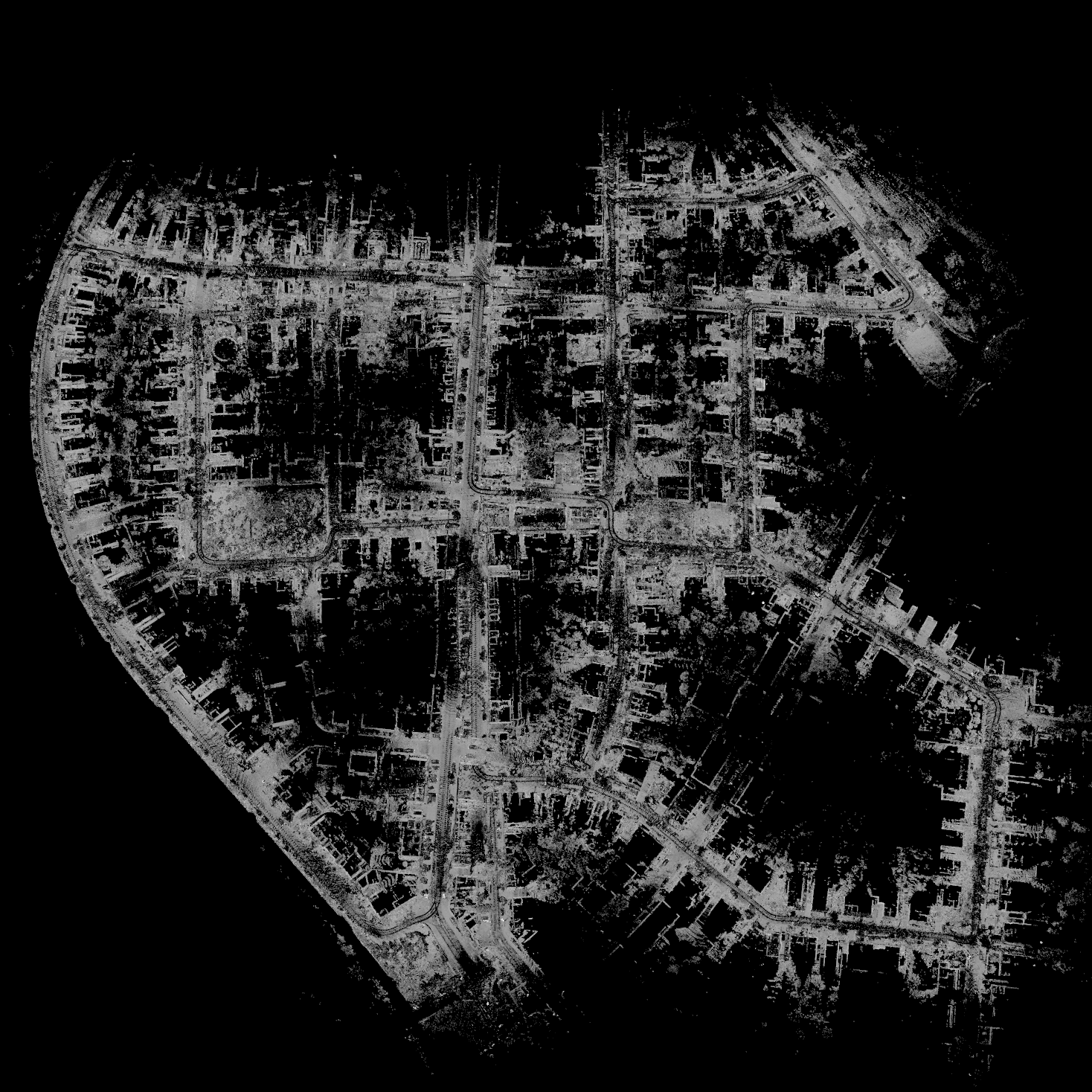}}\\[-0.2em]

  % Row 2 (3 across)
  \subfloat[Fast-VGICP]{%
    \includegraphics[width=0.32\textwidth]{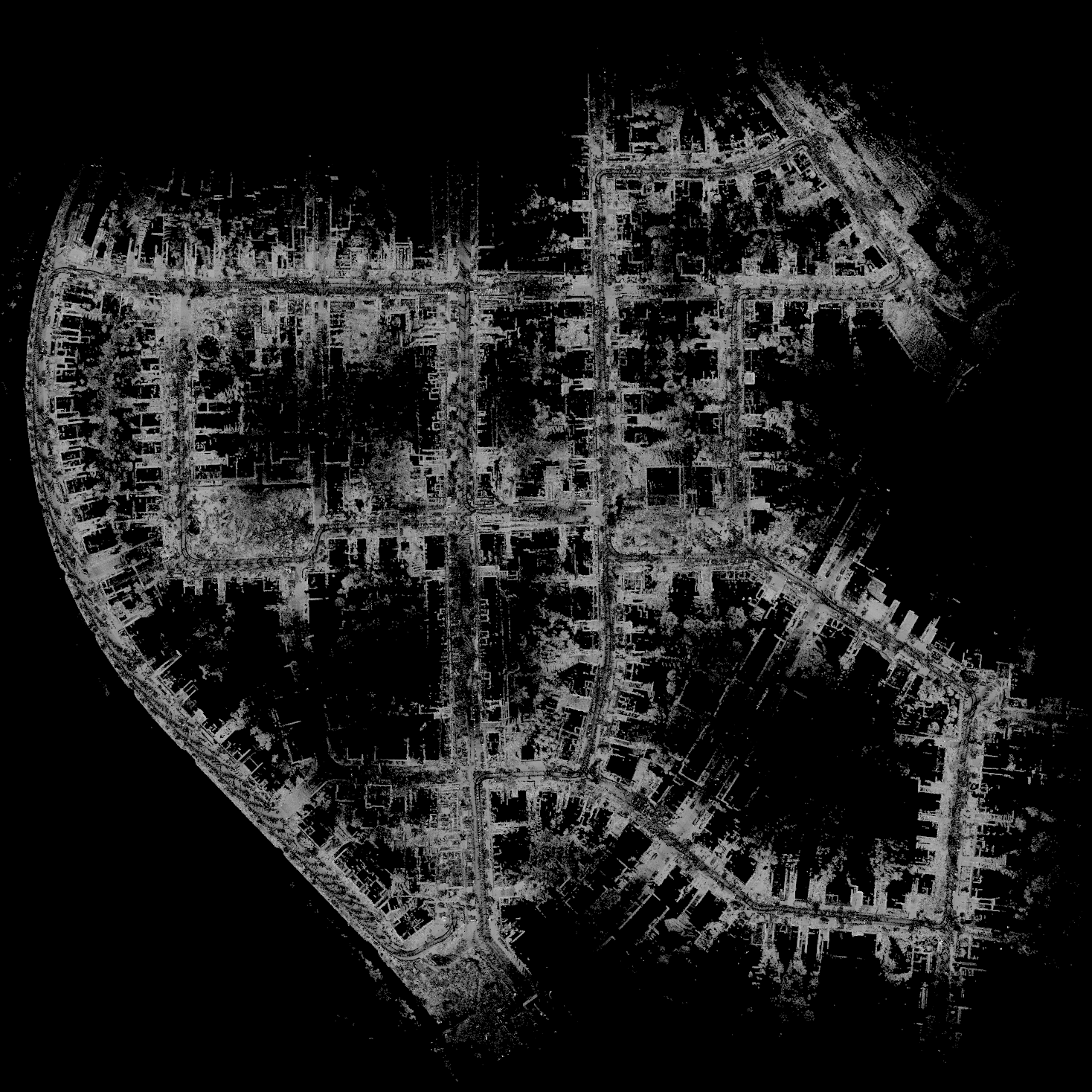}}\hfill
  \subfloat[GICP]{%
    \includegraphics[width=0.32\textwidth]{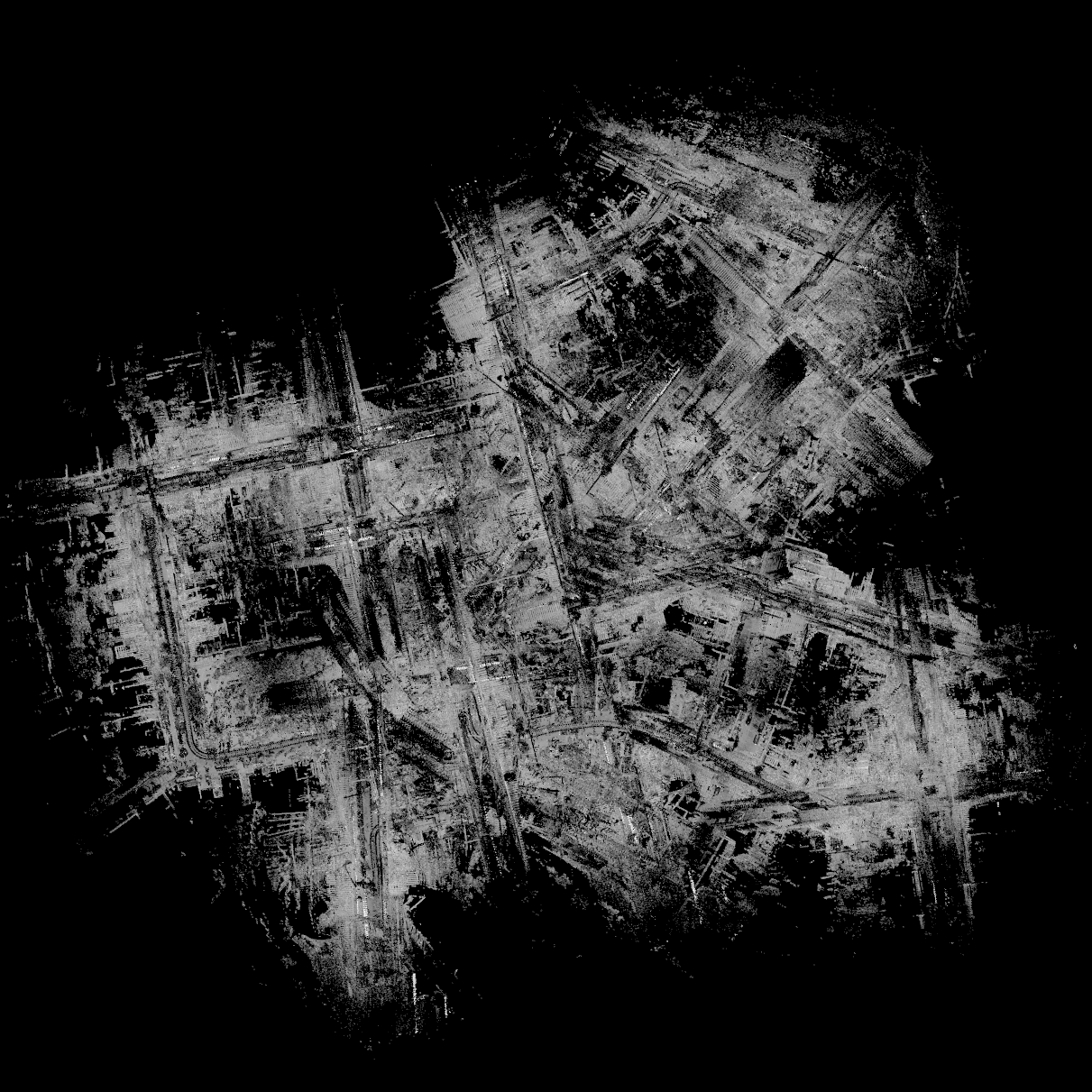}}\hfill
  \subfloat[ICP\label{fig:f}]{%
    \includegraphics[width=0.32\textwidth]{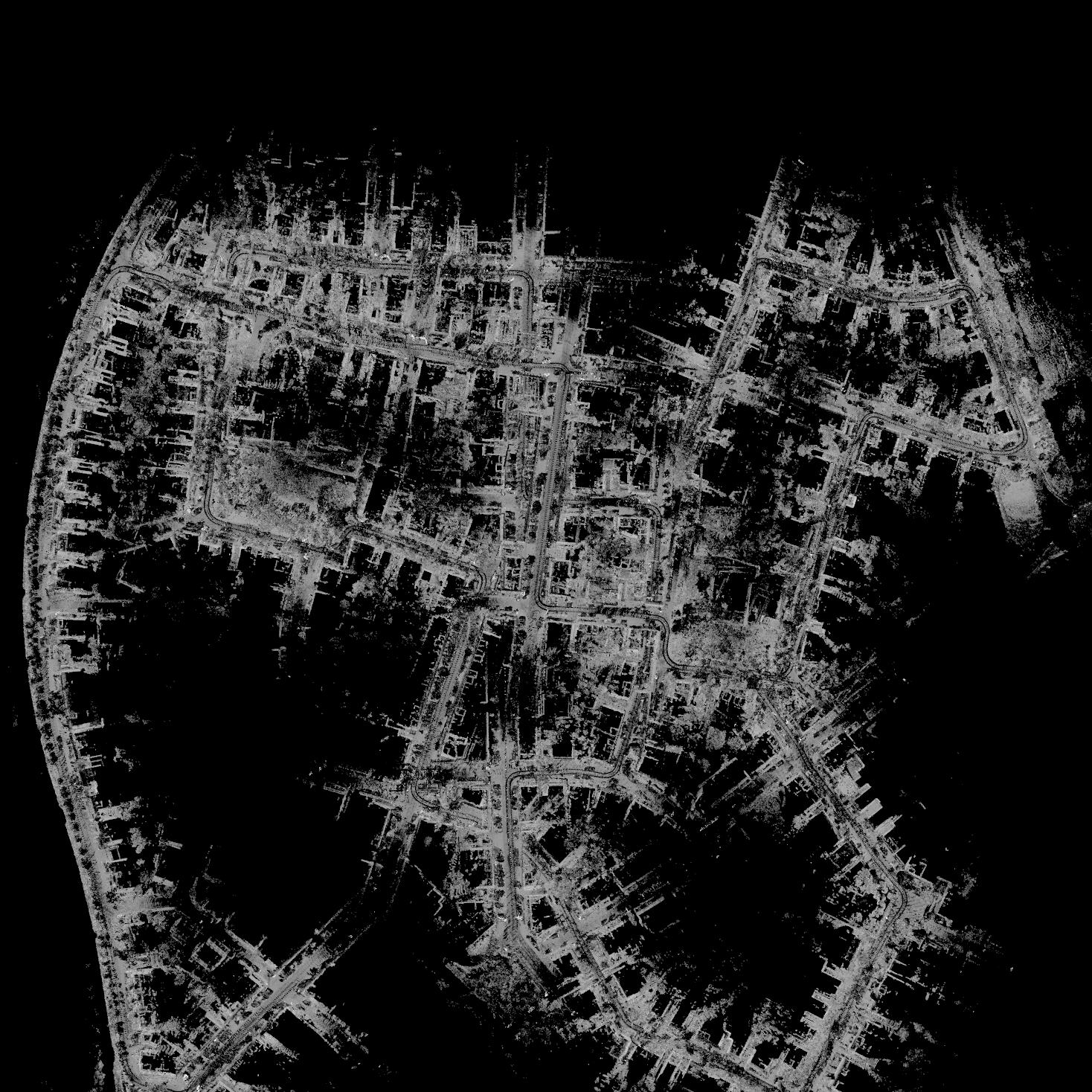}}\\[-0.2em]

  % Row 3 (single, centered)
  \hspace*{\fill}%
  \subfloat[NDT]{%
    \includegraphics[width=0.32\textwidth]{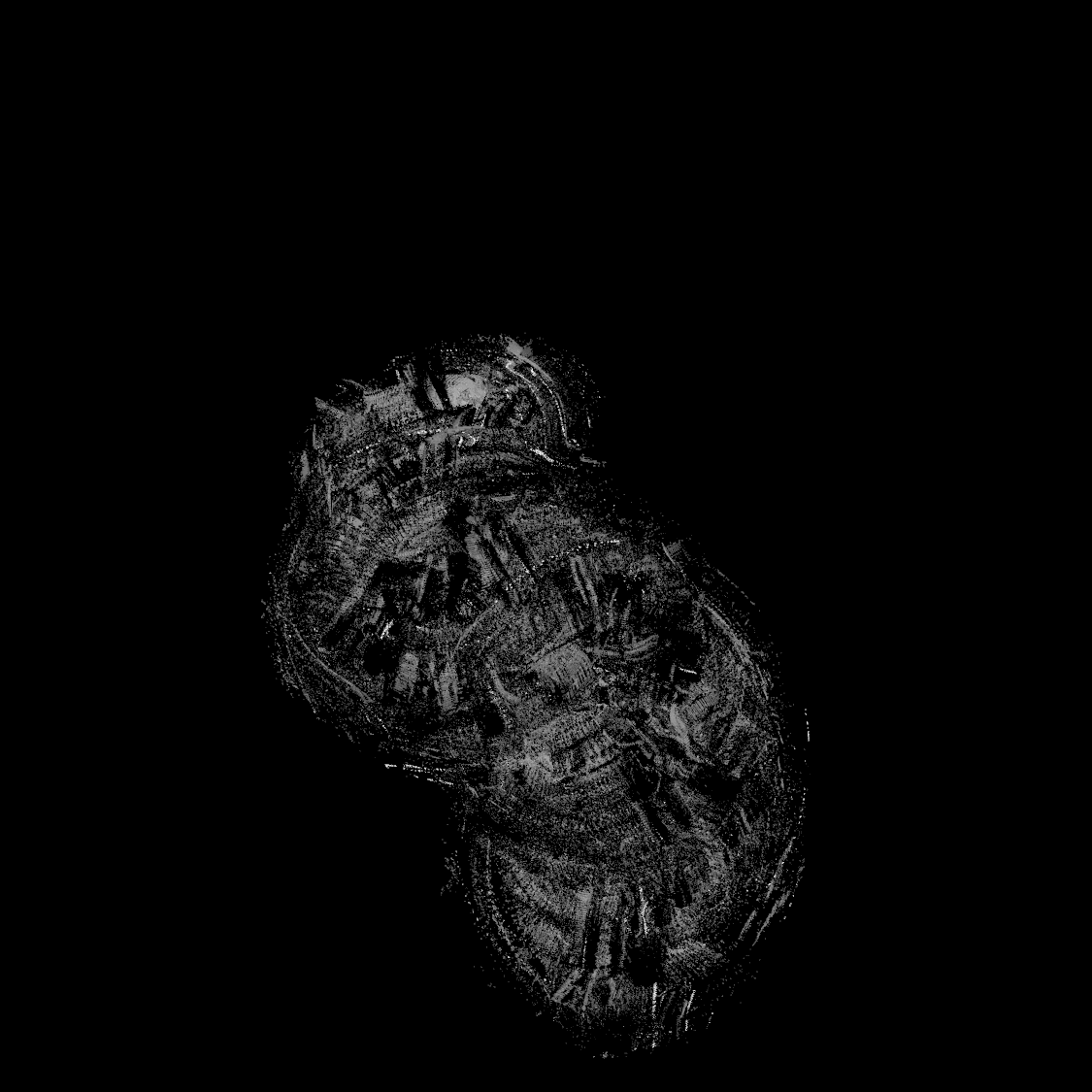}}%
  \hspace*{\fill}

  \caption{The stacked point clouds using the frame-to-frame tracking poses from KITTI \lidar sequence \texttt{00}.}
  \label{fig:kitti_stacked}
\end{figure*}

% In body

% In body
\begin{figure*}[t]
  \centering
  \newcommand{\colw}{0.31\textwidth} % tweak to 0.32 if it fits

  % -------- Row 1 (3 across, equal gaps) --------
  \makebox[\colw][c]{%
    \subfloat[GCVO, 2nd-order]{%
      \includegraphics[width=\colw]{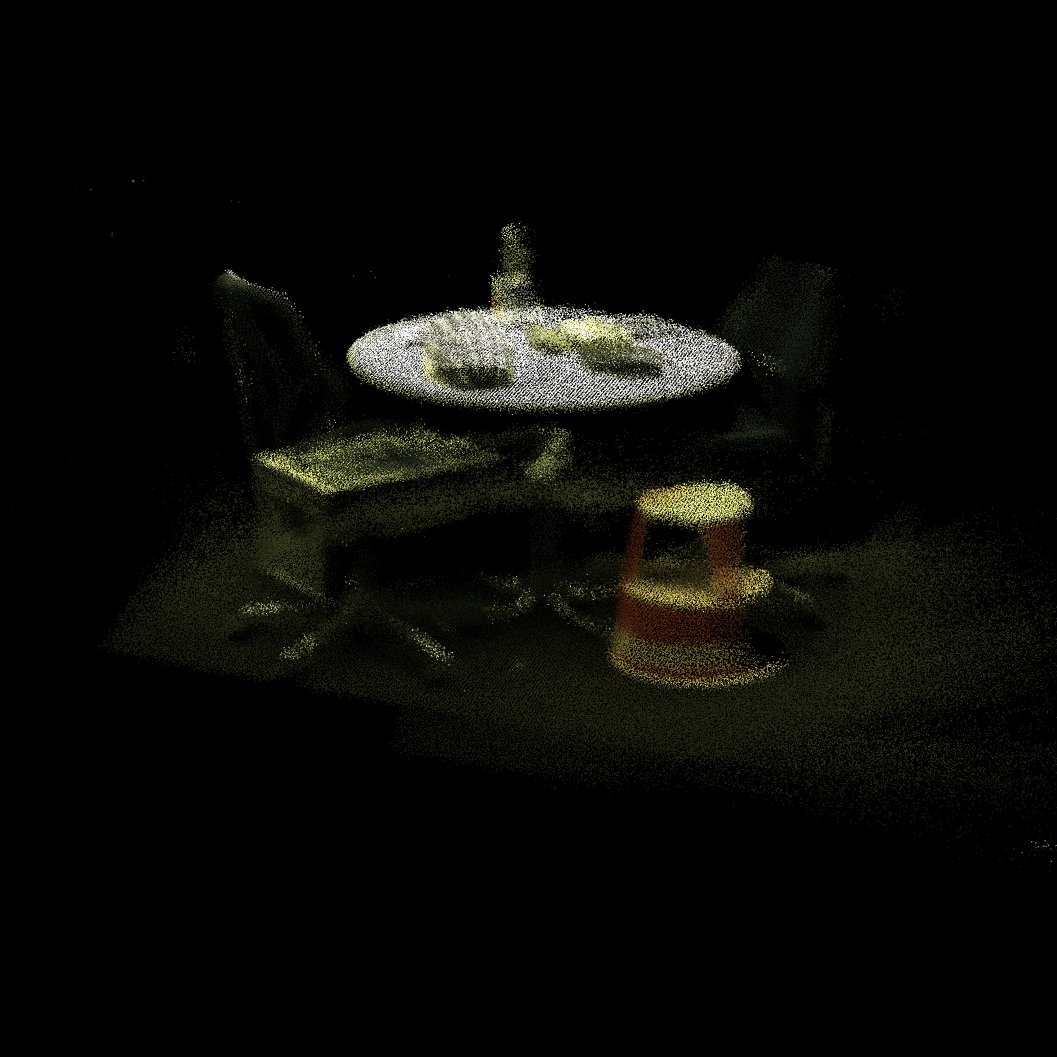}}}%
  \hfill
  \makebox[\colw][c]{%
    \subfloat[GCVO, 1st-order]{%
      \includegraphics[width=\colw]{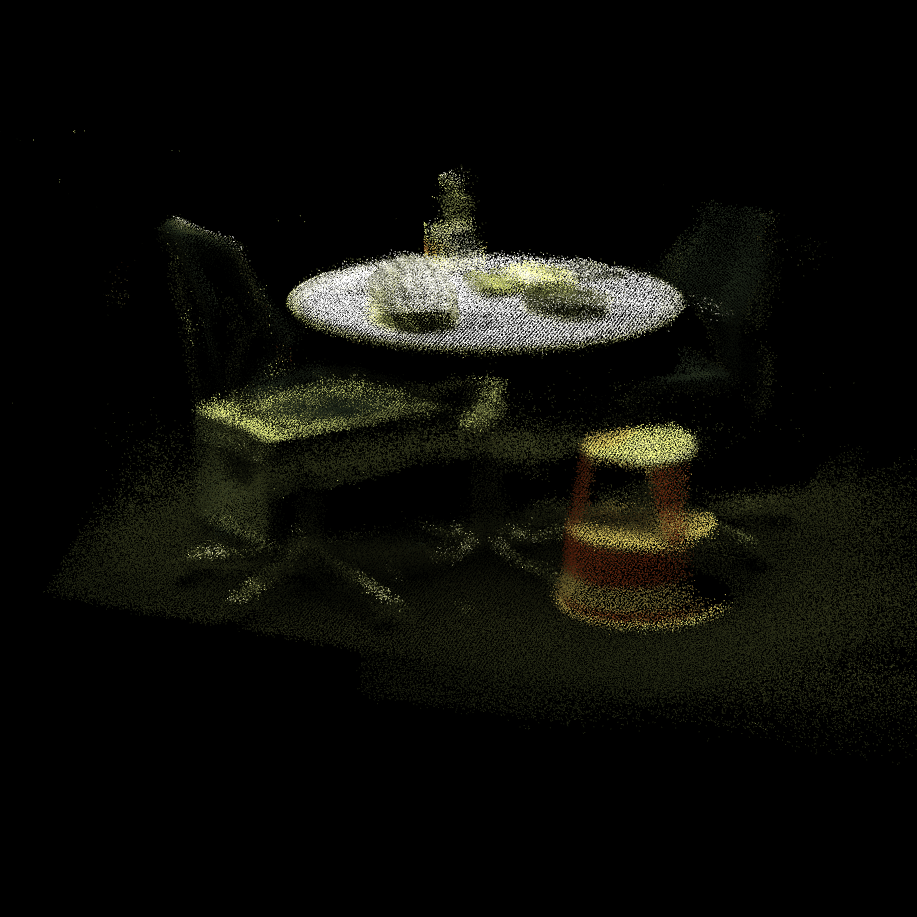}}}%
  \hfill
  \makebox[\colw][c]{%
    \subfloat[Fast-VGICP]{%
      \includegraphics[width=\colw]{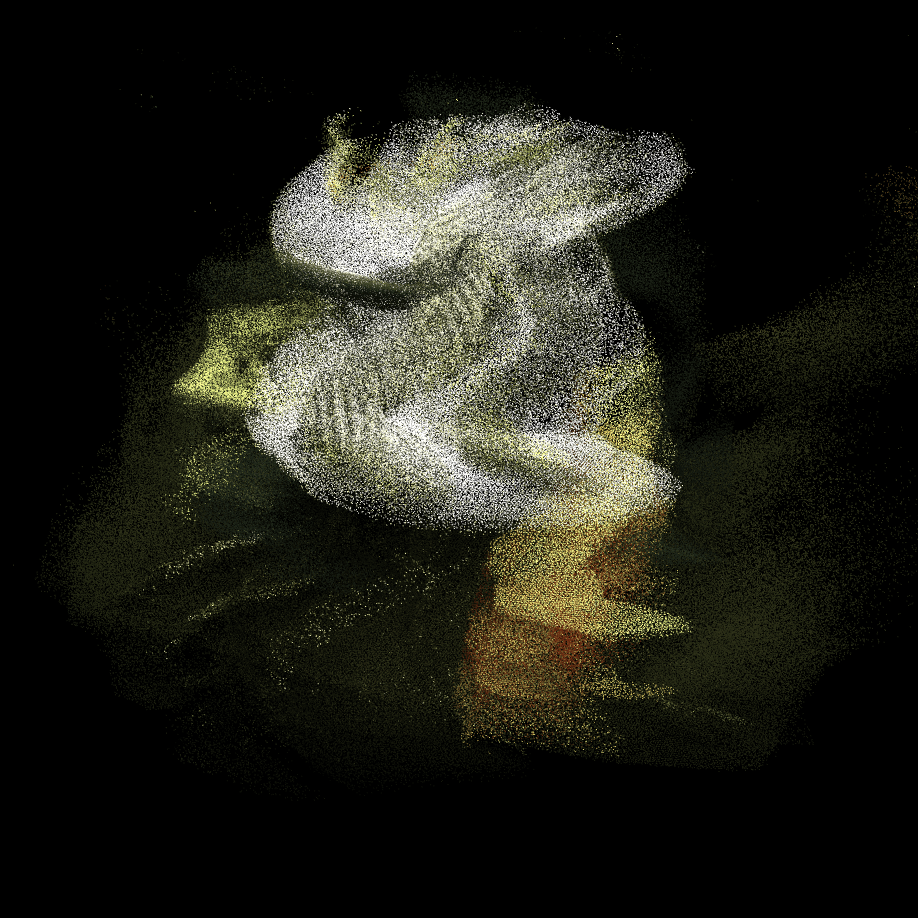}}}%

  \vspace{-0.2em}

  % -------- Row 2 (2 images + blank placeholder keeps spacing) --------
  \makebox[\colw][c]{%
    \subfloat[GICP]{%
      \includegraphics[width=\colw]{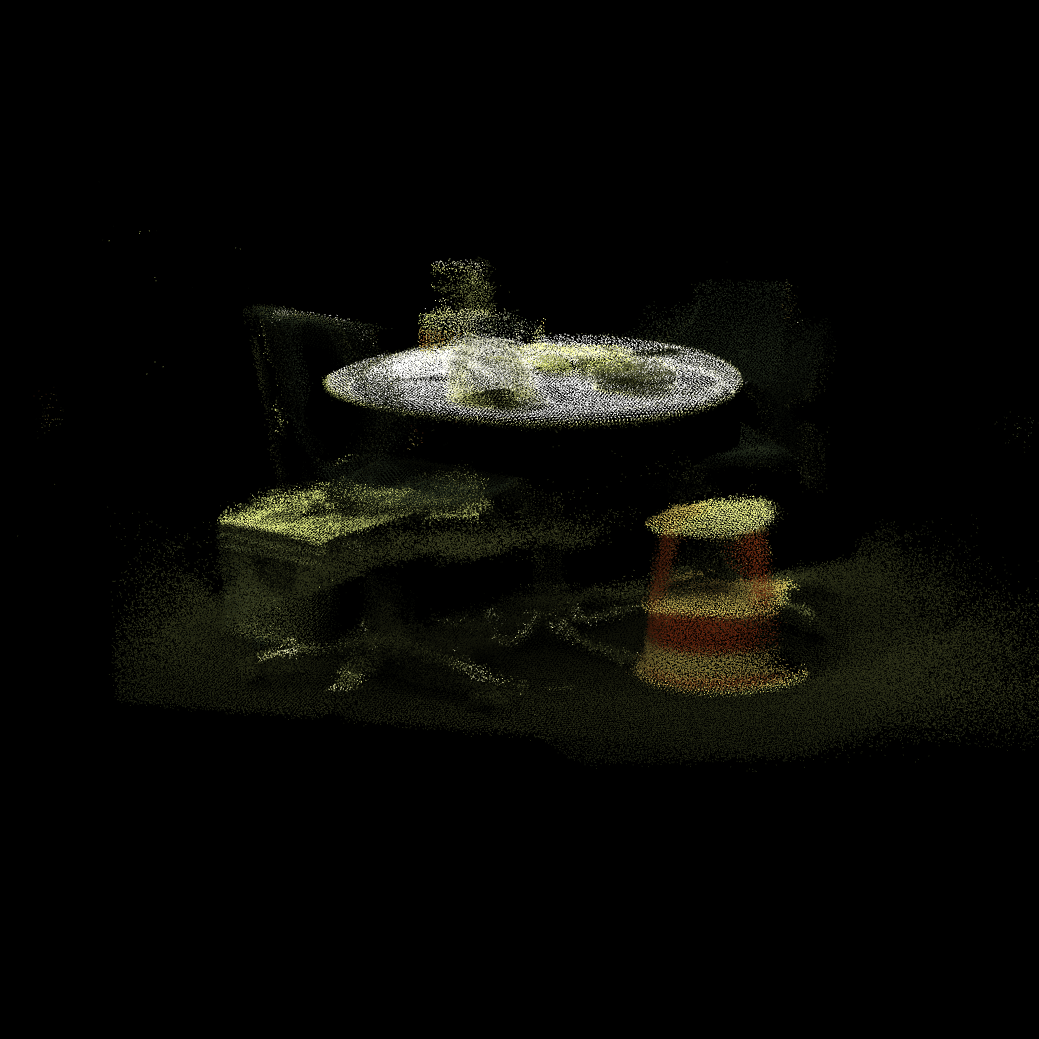}}}%
  \hfill
  \makebox[\colw][c]{%
    \subfloat[ICP]{%
      \includegraphics[width=\colw]{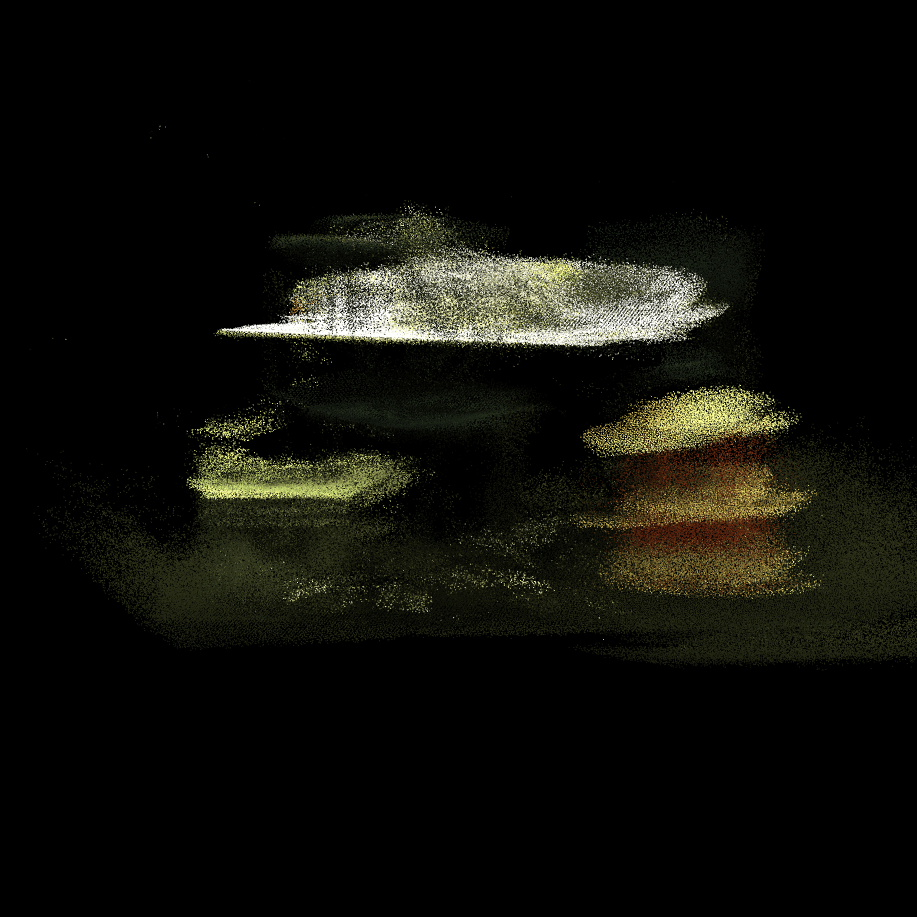}}}%
  \hfill
  % placeholder (not a subfloat, so it won't consume a label)
  \makebox[\colw][c]{\mbox{}}%

  \caption{The stacked point clouds using the frame-to-frame tracking poses from ETH3D RGB-D sequence \texttt{table\_3}.}
  \label{fig:eth3d_stacked}
\end{figure*}

\section{Ablation on point cloud density}
%\begin{wraptable}{r}{4.7cm}
\begin{table}
    \centering
    \caption{\textit{Impact of uniform vs non-uniform downsampling}}
    \vspace{-10pt}
    
    \resizebox{\columnwidth}{!}{
    \begin{NiceTabular}{c|c|c|c|c}[colortbl-like,
code-before =%
{
    % --- row 3 (sfm_bench): best/second-best per metric ---
    \cellcoloring{\bestcolor}{3}{2}
    \cellcoloring{\bestcolor}{4}{2}
    \cellcoloring{\bestcolor}{5}{2}
    %\cellcoloring{\secondbestcolor}{6}{2}
    \cellcoloring{\bestcolor}{3}{3}
    \cellcoloring{\bestcolor}{4}{3}
    %\cellcoloring{\secondbestcolor}{5}{3}
    \cellcoloring{\bestcolor}{6}{3}
    \cellcoloring{\bestcolor}{6}{4}
    \cellcoloring{\bestcolor}{5}{5}
}
]
    \toprule
    KITTI & \multicolumn{2}{c|}{G-CVO-2 w. voxel grid} &  \multicolumn{2}{c}{G-CVO-2 w. random}\\
      seq. id & trans. & rot. & trans. & rot.\\\midrule
03 &    {1.5743}	& {0.0087}	& 4.0074	& 0.0172  	       \\
05 &   {1.0557}  &	{0.0060}	   & 1.6303 & 0.0084 \\
06 &    {1.1641} &	0.0070	   & 1.7082 & {0.0065} \\ 
07 &    1.1106 &	{0.0065}	   & {1.0130} & 0.0082
\\\bottomrule
    \end{NiceTabular}}
    \label{tab:density}
\end{table}
As LiDAR inputs are generally not uniformly dense, we evaluate two types of down-sampling. We consider a voxel grid to achieve a more uniform density, and also evaluate random downsampling, which maintains non-uniformity. Tab.~\ref{tab:density} shows that G-CVO achieves higher accuracy with voxel-grid downsampling than with random downsampling, likely because the estimated covariance may be noisy with uneven point densities. Nonetheless, \ourmethod remains competitive even when point clouds are non-uniform.

%\fi
\end{document}